\DeclarePairedDelimiter{\abs}{|}{|}
\DeclarePairedDelimiter{\set}{\{}{\}}
\newcommand{\NSGA}{NSGA\nobreakdash-II\xspace}
\newcommand{\oneminmax}{\textsc{OneMinMax}\xspace}
\newcommand{\leadingonestrailingzeros}{\textsc{LeadingOnesTrailingZeros}\xspace}
\newcommand{\omm}{\textsc{OMM}\xspace}
\newcommand{\lotz}{\textsc{LOTZ}\xspace}
\newcommand{\mlotz}{$m$\lotz}
\newcommand{\R}{\ensuremath{\mathbb{R}}}
\newcommand{\N}{\ensuremath{\mathbb{N}}} 
\let\originalleft\left
\let\originalright\right
\renewcommand{\left}{\mathopen{}\mathclose\bgroup\originalleft}
\renewcommand{\right}{\aftergroup\egroup\originalright}
\newtheorem{theorem}{Theorem}
\newtheorem{lemma}[theorem]{Lemma}
\begin{document}

\title{Difficulties of the NSGA-II with the Many-Objective LeadingOnes Problem}

\author[1]{Benjamin Doerr}
\author[2]{Dimitri Korkotashvili}
\author[1]{Martin~S. Krejca}

\affil[1]{Laboratoire d'Informatique (LIX), CNRS, École Polytechnique, Institut Polytechnique de Paris}
\affil[2]{École Polytechnique, Institut Polytechnique de Paris}

\date{}

\maketitle

\begin{abstract}
    The NSGA-II is the most prominent multi-objective evolutionary algorithm (cited more than 50,000 times). Very recently, a mathematical runtime analysis has proven that this algorithm can have enormous difficulties when the number of objectives is larger than two (Zheng, Doerr. IEEE Transactions on Evolutionary Computation (2024)). However, this result was shown only for the OneMinMax benchmark problem, which has the particularity that all solutions are on the Pareto front, a fact heavily exploited in the proof of this result.

    In this work, we show a comparable result for the LeadingOnes\-TrailingZeroes benchmark. This popular benchmark problem appears more natural in that most of its solutions are not on the Pareto front. With a careful analysis of the population dynamics of the NGSA-II optimizing this benchmark, we manage to show that when the population grows on the Pareto front, then it does so much faster by creating known Pareto optima than by spreading out on the Pareto front. Consequently, already when still a constant fraction of the Pareto front is unexplored, the crowding distance becomes the crucial selection mechanism, and thus the same problems arise as in the optimization of OneMinMax. With these and some further arguments, we show that the NSGA-II, with a population size by at most a constant factor larger than the Pareto front, cannot compute the Pareto front in less than exponential time.
\end{abstract}

\textbf{Keywords:} NSGA-II, many-objective optimization, runtime analysis, LeadingOnes.

\section{Introduction}

Many real-world optimization problems feature several conflicting objectives.
This results in incomparable optimal solutions---the \emph{Pareto optima}---, where each optimum that is better than another one in at least one objective is necessarily worse in another objective.
Solvers are therefore expected to return a rich and diverse set of Pareto optima, for discrete problems also all of them, collectively known as the \emph{Pareto front}.
Due to the hardness of many multi-objective problems, \emph{heuristic} algorithms are oftentimes employed and constitute an essential class of solvers in this field~\cite{ZhouQLZSZ11}.
The most prominent multi-objective heuristic is the \emph{non-dominated sorting genetic algorithm~II}~\cite{DebPAM02} (\NSGA), counting over 50\,000 citations on Google Scholar.

The \NSGA belongs to the successful line of \emph{multi-objective evolutionary algorithms}~\cite{ZhouQLZSZ11} (MOEAs).
As such, it maintains a multi-set (the \emph{population}) of~$N$ solutions and refines it iteratively, creating new solutions based on random mutations of existing ones and then selecting the best solutions found so far.
This results in complex dynamics, for which the first mathematical runtime guarantees were proven only recently~\cite{ZhengLD22}: The \NSGA optimizes the bi-objective \oneminmax~\cite{GielL10} (\omm) and \leadingonestrailingzeros~\cite{LaumannsTZ04} (\lotz) benchmarks of size~$n$ in, respectively, $O(N n \log n)$ and $O(N n^2)$ function evaluations (in expectation), when the population size $N$ is at least a constant factor larger than the size of the Pareto front.
These runtimes agree with the asymptotic runtimes of other MOEAs (with suitable parameter ranges), such as the (global) simple evolutionary multi-objective optimizer~\cite{LaumannsTZWD02,Thierens03,Giel03} ((G)SEMO), the $(\mu+1)$ SIBEA~\cite{BrockhoffFN08,NguyenSN15}, the decomposition-based multi-objective evolutionary algorithm~\cite{LiZZZ16} (MOEA/D; only for a \omm variant), the NSGA-III~\cite{WiethegerD23,OprisDNS24}, the SMS-EMOA~\cite{ZhengD24}, and the Strength Pareto Evolutionary Algorithm~2~\cite{RenBLQ24} (SPEA2).

Runtime analyses confirming the strength of the \NSGA have also been conducted in other directions, such as for combinatorial optimization~\cite{CerfDHKW23}, for noisy optimization~\cite{DangOSS23gecco}, with crossover~\cite{DangOSS23aaai,DoerrQ23crossover}, and in approximative settings~\cite{ZhengD24approx}.

All these positive result, however, regard bi-objective problems, and this is no coincidence. In a very recent result~\cite{ZhengD24many}, it was shown that the \NSGA fails to optimize the \omm benchmark efficiently once the number of objectives is~3 or more. More precisely, it was proven that the \NSGA with any population size that is linear in the size of the Pareto front for an exponential number of iterations cannot reach a population that witnesses the Pareto front.
The reason for this drastic decline in performance was attributed to the secondary criterion in the selection of the next population, the \emph{crowding distance}.
We note that a declining performance of \NSGA for a higher number of objectives has also been observed empirically~\cite{khare2003performance}, but this work does not exhibit such a drastic different between two and three objective, and it also does not blame the crowding distance for the difficulties.
That indeed the crowding distance is the likely culprit can now also be seen from the fact that the NSGA-III~\cite{WiethegerD23,OprisDNS24} and the SMS-EMOA~\cite{BianZLQ23,ZhengD24} provably easily optimize many classic benchmarks in three or more objectives; here we note that these two algorithms, as the \NSGA, use non-dominated sorting as first selection criterion, but use a different secondary selection criterion, namely closeness to reference points for the NSGA-III and the hypervolume for the SMS-EMOA.

In this light, the negative result of~\cite{ZhengD24many} appears to be a crucial step towards understanding the strengths and weaknesses of the different MOEAs massively used in practice. A clear limitation of this result, however, is the fact that it applies only to the \omm benchmark. This benchmark is extremely simple and in particular has the uncommon property that all solutions are Pareto-optimal, a fact heavily exploited in the proof of the result. While one could argue that proven difficulties on a simple benchmark should indicate even greater problems on more complex benchmarks, in view of the importance of this questions, it appears desirable to back up this intuition with more rigorous research. This is what we aim at in this work.

\textbf{Our contribution:}
We show that the poor performance of the \NSGA for more than~2 objectives is not restricted to \omm but extends to the \lotz benchmark. This benchmark does not have the particularity that all solutions are Pareto-optimal, rather the vast majority of the solutions are dominated by others. In that sense, this benchmark is closer to a real-world multi-objective optimization problem (while, of course, still being a synthetic benchmark simple enough to admit mathematical runtime analyses).

More precisely, we show the following result. Let the number of objectives $m$ be an even integer (the \lotz benchmark is only defined for even numbers of objectives). Assume that $m \ge 4$. Denote by $M = (\frac{2 n}{m} + 1)^{m / 2}$ the size of the Pareto front of the $m$-objective \lotz benchmark of problem size~$n$. Consider optimizing this benchmark via the \NSGA with population size at most $aM$, where $a > 1$ is an arbitrary constant. Then for all $T \in \N$, with probability at least $1 - T\exp(-\Omega(n))$ the \NSGA misses a constant fraction of the Pareto front for the first~$T$ iterations (see \Cref{thm:main,thm:general} and note that the precise failure probability of the result can be even smaller than the simplified $T \exp(-\Omega(n))$ expression stated here).
This result implies that the expected runtime (time to have the full Pareto front witnessed by the population) of the \NSGA on \lotz is at least exponential in expectation and with high probability when the number of objective is at least~$4$.

Although our main result is comparable to the one for the $m$-objective \omm benchmark by Zheng and Doerr~\cite{ZhengD24many}, due to the more complex fitness landscape of the \lotz problem, our analysis requires significant additional effort. In the \omm problem, all solutions are Pareto-optimal, and consequently, they are all contained in the first (and only) front of the non-dominated sorting of the combined parent and offspring population. Hence in this first front, a non-trivial selection decision is made via the crowding distance, and the deficiencies of the crowding distance detected in~\cite{ZhengD24many} lead to the undesired loss of Pareto-optimal solution values.

The \lotz benchmark, in contrast, has (strictly) dominating solution values. Consequently, it could happen that the first front of the non-dominated sorting contains at most $N$ solutions, and in this case, these (which include all Pareto optima in the population) would all survive into the next generation. Hence our main technical contribution is a careful analysis of the population dynamics, which shows that the size of the first front of the non-dominated sorting grows to above $N$ faster than that all Pareto optima are found; then, as in the case of \omm, a non-trivial selection decision is made in the first front and the short-comings of the crowding distance come into play.

We complement our theoretical results with an empirical analysis. Here we observe that the \NSGA is indeed capable of quickly finding a constant fraction of the Pareto front but struggles to get past a constant fraction.
Larger populations manage to cover larger fractions of the Pareto front.
In addition, we study random and binary-tournament parent selection mechanisms as well as uniform crossover and see that the performance is essentially the same.
Last, and perhaps most interestingly, we consider the \NSGA with one-bit mutation and see that while it still does not cover the entire Pareto front, it covers a much larger fraction than the algorithms with bit-wise mutation. We have no explanation for this.

\textbf{Outline:}
In \Cref{sec:preliminaries}, we introduce our notation and terminology and define the \NSGA as well as the $m$-objective \lotz benchmark (\mlotz).
We prove in \Cref{sec:theory} that the \NSGA is inefficient on \mlotz for $m \geq 4$, with \Cref{thm:main,thm:general} as our main result.
We complement our theoretical analyses empirically in \Cref{sec:experiments}, and we conclude our paper in \Cref{sec:conclusion}.

\section{Preliminaries}
\label{sec:preliminaries}

We start by introducing some general notation and terminology of multi-objective optimization and then define the $m$\lotz benchmark function and the \NSGA.

\subsection{Notation and Terminology}
\label{sec:preliminaries:notation}

Let~$\N$ denote the set of non-negative integers, hence including~$0$, and let~$\R$ denote the set of real numbers.
For all $k, \ell \in \N$, let $[k .. \ell] \coloneqq [k, \ell] \cap \N$ and $[k] \coloneqq [1 .. k]$.
Moreover, for all $k \in \N_{\geq 1}$,  $x \in \R^k$, and  $i \in [n]$, we denote the value of~$x$ at position~$i$ by~$x_i$.

In the following, let $n \in \N_{\geq 1}$ and $m \in \N_{\geq 3}$.
We consider \emph{pseudo-Boolean many-objective maximization}, that is, the maximization of functions $f\colon \{0, 1\}^n \to \R^m$.
We call each $x \in \{0, 1\}^n$ an \emph{individual}, we call $f(x) \in \R^m$ the \emph{objective-value of~$x$}, and for all $i \in [m]$, we denote the \emph{value of objective~$i$ of~$x$} by $f_i(x) \coloneqq f(x)_i$.

We compare all objective-values $u, v \in \R^m$ via \emph{domination}, which is a partial order.
We say that \emph{$u$ weakly dominates~$v$} (written $u \succeq v$) if and only if for all $i \in [m]$ we have that $u_i \geq v_i$, and we say that \emph{$u$ strictly dominates~$v$} (written $u \succ v$) if and only if at least one of these inequalities is strict.
If and only if neither~$u$ weakly dominates~$v$ nor the other way around, we say that~$u$ and~$v$ are \emph{incomparable}.
Moreover, we extend the notion of domination to individuals by implicitly referring to the domination of their objective-values.

For a given many-objective pseudo-Boolean function~$f$, we say that an individual $x \in \{0, 1\}^n$ is \emph{Pareto-optimal} if and only if for all $y \in \{0, 1\}^n$, we have that $f(y) \nsucc f(x)$; we call~$f(x)$ a \emph{Pareto optimum} in this case.
We call the set of all Pareto-optimal individuals the \emph{Pareto set (of~$f$)}, denoted by~$S^*$, and we call the set of all Pareto optima the \emph{Pareto front (of~$f$)}, denoted by~$F^*$.

When we use big-Oh notation, we always refer to the asymptotics in the length~$n$ of the bit-string representation, a number often called the \emph{problem size}.

\subsection{The Many-Objective LOTZ Benchmark}
\label{sec:preliminaries:mlotz}

Let $m \in \N_{\geq 4}$ be even, and let $n \in \N_{\geq m}$ such that~$m / 2$ divides~$n$.

\textbf{Definition.}
The $m$-objective \leadingonestrailingzeros~\cite{LaumannsTZ04} (\mlotz) benchmark bases its definition of the objectives on the bi-objective \lotz benchmark.
The latter maps an individual, as the name suggests, to the longest number of consecutive~$1$s from the first position and to the longest number of consecutive~$0$s from the last position.

In \mlotz, the~$m$ objectives are defined over $\frac{m}{2}$ independent parts of the bit-string, where each part defines two \lotz objectives for a different consecutive number of $n' \coloneqq \frac{2n}{m}$ bits of the input.
Formally, for all $k \in [m]$ and all $x \in \{0, 1\}^n$, abbreviating $f_k \coloneqq m\lotz_k$, we have
\begin{equation}
    \label{eq:mlotz}
    f_k(x) =
    \begin{cases}
        \sum_{i \in [n']} \prod_{j \in [i]} x_{j + (k-1)n'/2}           & \text{if } k \text{ is odd,} \\
        \sum_{i \in [n']} \prod_{j \in [i .. n']} (1 - x_{j+(k-2)n'/2}) & \text{else.}
    \end{cases}
\end{equation}
Consequently, $f_x(x)$ for odd~$k$ describes the number of leading~$1$s of the $(k+1)/2$-th block of the argument, whereas for even~$k$ it refers to the number of trailing~$0$s in the $k/2$-th substring.

\textbf{Pareto set and front.}
For all $i, j \in [n]$ with $i \leq j$, let $x_{[i .. j]}$ denote the individual $(x_{a})_{a \in [i .. j]}$, that is, the substring from position~$i$ to~$j$.
An individual $x \in \{0, 1\}^n$ is Pareto-optimal for \mlotz if and only if for all $k \in [\frac{m}{2}]$ there is an $i \in [0 .. n']$ such that $x_{[(k - 1) n' + 1 .. k n']} = 1^i 0^{n' - i}$.
Consequently, for a Pareto-optimal individual~$x$, we have for all $k \in [\frac{m}{2}]$ that $f_{2k - 1}(x) + f_{2k}(x) = n'$.
Thus, the size of the Pareto set and the Pareto front are both $(n' + 1)^{m / 2} \eqqcolon M$.

We say that two Pareto optima $u, v \in [0 .. n']^{m / 2}$ are \emph{neighbors} (denoted by $u \sim v$) if and only if there is an $i \in [\frac{m}{2}]$ such that $\abs{u_{2 i} - v_{2 i}} = 1$ (equivalent to $\abs{u_{2 i - 1} - v_{2 i - 1}} = 1$) and for all $j \in [\frac{m}{2}] \setminus \{i\}$ holds that $u_{2 j} = v_{2 j}$ (equivalent to $u_{2 j - 1} = v_{2 j - 1}$).

\subsection{The NSGA-II}
\label{sec:preliminaries:nsgaii}

The \emph{non-dominated sorting genetic algorithm II}~\cite{DebPAM02} (\mbox{\NSGA}, \Cref{nsga}) maximizes a given many-objective pseudo-Boolean function by iteratively refining a multi-set of $N \in \N_{\geq 1}$ individuals---the so-called \emph{population}.
It operates as follows, where we note that we define important operators below later.
The population is initialized with individuals drawn independently and uniformly at random.
Afterward, in an iterative fashion, from the current population $N$ new individuals are generated. We consider fair, random, and binary tournament for the selection of the \emph{parents}. As in most other theoretical works on the \NSGA, we do not consider here the use of crossover for the generation of the \emph{offspring}, but only mutation, namely \emph{standard bit mutation}.
Out of the combined parent and offspring population, the \NSGA selects~$N$ individuals based on a three-stage process.
First, the combined population is sorted via \emph{non-dominated sorting}, and at least~$N$ individuals are selected with respect to increasing fronts.
If more than~$N$ individuals are selected, ties among those in the highest front are broken with respect to the \emph{crowding distance} of the individuals, preferring individuals with a higher crowding distance.
If this still results in ties among individuals with the lowest crowding distance, those ties are broken uniformly at random.

\begin{algorithm}
    \caption{The non-dominated sorting genetic algorithm II~\cite{DebPAM02} (NSGA-II) for maximizing a given $m$-objective pseudo-Boolean function $f\colon \{0, 1\}^n \to \R^m$.}
    \label{nsga}
    \textbf{Input:} Population size $N \in \N_{\geq 1}$, objective function~$f$\\
    \textbf{Output:} Population $P_t$ upon termination
    \begin{algorithmic}[1]
        \State Independently and uniformly at random (u.a.r.), generate the initial population $P_0 = \{x_i\}_{i \in [N]}$
        \While{termination criterion not met}
        \State Generate the offspring population $Q_t$ of size $N$
        \State Combine populations $R_t = P_t \cup Q_t$
        \State Use non-dominated sorting to divide $R_t$ into fronts $F_1, F_2, \dots$
        \State Find the critical front~$i^*$
        \State Use \Cref{cdis} to calculate the crowding distance of each individual in $F_{i^*}$
        \State Let $F_{i^*}'$ be the $N - \left|\bigcup_{i \in [i^*-1]} F_i\right|$ individuals in $F_{i^*}$ with the largest crowding distance, breaking ties u.a.r.
        \State $P_{t+1} = \left(\bigcup_{i \in [i^*-1]} F_i\right) \cup F_{i^*}'$
        \State Increment~$t$ by~$1$
        \EndWhile
    \end{algorithmic}
\end{algorithm}

\textbf{Standard bit mutation.}
Given a \emph{parent} individual $x \in \{0, 1\}^n$, standard bit mutation creates a new \emph{offspring} individual $y \in \{0, 1\}^n$ by first copying $x$ and then flipping each bit independently with probability~$\frac{1}{n}$.
Formally, for all $i \in [n]$ independently, we have with probability $1 - \frac{1}{n}$ that $y_i = x_i$ and with  probability~$\frac{1}{n}$ that $y_i = 1 - x_i$.

\textbf{Non-dominated sorting.}
Given a population~$R$, non-dominated sorting divides the individuals recursively into~$I \in \N_{\geq 1}$ (population) fronts $(F_i)_{i \in [I]}$, depending on how many fronts dominate an individual.
For each $i \in I$, the individuals in~$F_i$ are those in~$R$ that are not strictly dominated if all individuals from smaller fronts are removed first.
Formally, $F_i \coloneqq \{x \in R \setminus \bigcup_{j \in [i - 1]} F_j \mid \forall y \in R \setminus \bigcup_{j \in [i - 1]} F_j\colon f(y) \nsucc f(x)\}$.

The \NSGA always selects full fronts increasingly until the number of individuals it selected is at least~$N$.
The smallest index that satisfies this property is called the \emph{critical front $i^* \in [I]$}.
That is, $i^* \coloneqq \min \set{i \in I \mid \abs{\bigcup_{j \in [i]} F_j} \geq N}$.
If and only if $\abs{\bigcup_{j \in [i^*]} F_j} > N$, then non-dominated sorting has selected too many individuals, and the \NSGA continues with removing individuals from~$F_{i^*}$ according to their crowding distance.

\textbf{Crowding distance.}
The crowding distance (\Cref{cdis}) of a population~$S$ assigns a non-negative number or infinity to each individual in~$S$.
This value is the sum of the crowding distances \emph{per objective}.
The latter is defined for each objective $i \in [m]$ by sorting~$S$ with respect to objective~$i$ (breaking ties arbitrarily) and assigning a crowding distance for~$i$ of infinity to the two end points of this sorting.
The crowding distance for~$i$ of all other individuals is the difference of objective~$i$ of their two neighbors divided by the difference of objective~$i$ of the two end points (\cref{line:crowdingDistanceComputation}).

The \NSGA selects among the individuals in~$F_{i^*}$ those $N - \abs{\bigcup_{i \in [i^* - 1]} F_i}$ with the largest crowding distance.
Ties are broken uniformly at random.

\begin{algorithm}
    \caption{Computation of the crowding distance $\text{cDis}(S)$.}
    \label{cdis}
    \textbf{Input:} $S = \{S_i\}_{i \in [\abs{S}]}$, a multi-set of individuals \\
    \textbf{Output:} $\text{cDis}(S) = (\text{cDis}(S_i))_{i \in [\abs{S}]}$, where $\text{cDis}(S_i)$ is the crowding distance for $S_i$
    \begin{algorithmic}[1]
        \State $\text{cDis}(S) = (0)_{i \in [\abs{S}]}$
        \For {each objective $f_i$}
        \State Sort $S$ in order of descending $f_i$ value: $S_{i,1}$ to $S_{i,|S|}$
        \State $\text{cDis}(S_{i,1}) = +\infty, \text{cDis}(S_{i,|S|}) = +\infty$
        \For {$j = 2$ to $|S| - 1$}
        \State $\text{cDis}(S_{i,j}) = \text{cDis}(S_{i,j}) + \frac{f_i(S_{i,j-1}) - f_i(S_{i,j+1})}{f_i(S_{i,1}) - f_i(S_{i,|S|})}$
        \label{line:crowdingDistanceComputation}
        \EndFor
        \EndFor
    \end{algorithmic}
\end{algorithm}

\textbf{Algorithm-specific terminology.}
We say that a population~$P$ \emph{covers} a Pareto optimum $v \in \R^m$ (of~$f$) if and only if there is an individual $x \in P$ such that $f(x) = v$.
We extend this notion to sets~$F$ of Pareto optima, saying that \emph{$P$ covers~$F$} if and only if each $v \in F$ is covered by~$P$.

The \emph{runtime}~$T$ of the \NSGA (on an $m$-objective function) is the (random) number of evaluations of~$f$ until the population of the \NSGA covers the Pareto front~$F^*$ of~$f$ for the first time.
Formally, since the \NSGA performs~$N$ evaluations of~$f$ in each iteration (and~$N$ once initially), it holds that $T = \min \set{N(t + 1) \in \N \mid P_t \textrm{ covers } F^*}$.

\section{Theoretical Runtime Analysis}
\label{sec:theory}

Our main result in this section is \Cref{thm:main} (for fair parent selection) and its extension to random and binary tournament selection (\Cref{thm:general}), which prove that the \NSGA (\Cref{nsga}) struggles to optimize \mlotz (\cref{eq:mlotz}) for $m \in \N_{\geq 4}$ objectives within sub-exponential time if its population size~$N$ is linear in the size~$M$ of the Pareto front.
We recall from \Cref{sec:preliminaries:mlotz} that $M = (\frac{2 n}{m} + 1)^{m / 2}$.
The result shows that the \NSGA is missing a constant fraction of the Pareto front with overwhelming probability.
That is, the failure probability is at least exponentially small in $\frac{m}{n} M$ in each iteration.
Thus, for any number of iterations sub-exponential in $\frac{m}{n} M$, the \NSGA is missing a constant fraction of the Pareto front with probability $1 - o(1)$.

\begin{theorem}\label{thm:main}
    Let $m \in \mathbb{Z}_{\geq 4}$ and $a > 1$ be constants, with $m$ even. Consider using the NSGA-II with population size  $N \leq aM = a(2n/m + 1)^{m/2}$, fair selection (every parent creates one offspring), and bitwise mutation to optimize \mlotz with problem size $n$. Then for any $T$ and all $1 \leq t \leq T$, with a probability of at least $1 - T\exp(-\Omega(\frac{m}{n}M))$ the combined parent and offspring population $R_t = P_t \cup Q_t$ does not cover a constant fraction of the Pareto front.
\end{theorem}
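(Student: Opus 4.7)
The plan is to track two quantities along a run: $S_t$, the number of \emph{distinct} Pareto-optimal objective values covered by the combined population $R_t$, and $Y_t$, the number of Pareto-optimal individuals in $R_t$ counted with multiplicity. Since every Pareto-optimal individual belongs to the first non-dominated front $F_1(R_t)$, once $Y_t > N$ the crowding-distance mechanism, together with its uniform tie-break, necessarily becomes the decisive survival criterion among the Pareto-optimal representatives. The argument then splits into two phases. Phase~1 establishes, with overwhelming probability, the existence of a first time $t^\star$ at which $Y_{t^\star} > N$ while simultaneously $S_{t^\star} \le (1-c)M$ for some absolute constant $c > 0$. Phase~2 shows that from $t^\star$ onwards the crowding-distance selection, together with uniform tie-breaking, loses Pareto optima at least as fast as new ones are produced, so that $S_t \le (1-c')M$ for some constant $c' > 0$ throughout $t \in [t^\star, T]$ with per-step failure probability $\exp(-\Omega(\frac{m}{n} M))$. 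A union bound over $t \le T$ then yields the theorem.

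For Phase~1 the driving quantitative input is the asymmetry between \emph{duplicating} and \emph{spreading}. A single fair-selection iteration clones each Pareto-optimal parent with constant probability $(1-1/n)^n \ge 1/(2e)$, so $Y_t$ grows by a constant factor times its current value as long as $Y_t < N$. Creating a \emph{new} Pareto optimum, in contrast, requires either mutating a neighbouring Pareto optimum by flipping exactly one boundary bit, which has probability $\Theta(1/n)$ per relevant parent of which at most $O(m)$ exist per target, or reaching the target from a non-Pareto parent through several specific bit flips, which only contributes lower-order probability. Chernoff bounds then yield $S_{t+1} - S_t = O(S_t \cdot m/n)$ with very high probability per step, whereas $Y_t$ doubles in $\Theta(1)$ iterations; consequently $Y_t$ crosses $N$ after $O(\log M)$ iterations, at which point $S_{t^\star}$ has grown by a factor at most $(1 + O(m/n))^{O(\log M)} = 1 + o(1)$ from its initial near-zero value. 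The separation of these constants is what forces $S_{t^\star} \le (1-c)M$.

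For Phase~2 I would lift the crowding-distance argument of Zheng and Doerr from \omm to our setting. The key structural observation is that for $m \ge 4$ a Pareto optimum $v$ of \mlotz is extremal in some objective $f_i$ only if at least one of its blocks is entirely ones or entirely zeros, so at most $O\bigl(m \cdot (n'+1)^{m/2-1}\bigr) = o(M)$ Pareto optima can have nonzero crowding distance; the remaining $\Omega(M)$ receive crowding distance zero, and survivors among them are chosen by uniform tie-break. The expected number of Pareto optima newly covered in one step is, as in Phase~1, $O(M \cdot m/n)$, whereas the expected number of Pareto optima whose \emph{sole} representative is discarded by the tie-break is of the same order, since the tie-break acts uniformly over $\Omega(M)$ individuals of which only a $(1-c)$-fraction are duplicated. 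A negative-drift argument then keeps $S_t$ bounded strictly below $M$, and standard large-deviation bounds applied to the $\Theta(N)$ independent mutation and tie-break events give the stated per-step failure probability.

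The main obstacle, as highlighted by the authors in their introduction, is Phase~1. Both the cloning and the discovery processes are coupled through the same parent distribution, and the first non-dominated front additionally contains dominated-but-incomparable individuals that can push its size past $N$ before the population has become Pareto-heavy. Verifying carefully that the separation between the duplication rate $\Omega(1)$ and the discovery rate $O(m/n)$ survives all these coupling effects, and correctly identifying the right moment $t^\star$ at which to hand off to the crowding-distance analysis, is the delicate technical step. Once $t^\star$ is reached, Phase~2 is a comparatively routine adaptation of the existing \omm arguments to the higher-dimensional grid-shaped Pareto front of \mlotz.
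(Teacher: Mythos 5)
Your high-level strategy matches the paper's: track the multiplicity of Pareto-optimal individuals against the number of distinct covered Pareto optima, show the former outpaces the latter so that the first front exceeds $N$ while a constant fraction of the front is still uncovered, and then invoke the crowding-distance/uniform-tie-break deficiency. However, both phases as you sketch them contain genuine gaps. In Phase~1, your claim that $Y_t$ grows multiplicatively ``from its initial near-zero value'' until it crosses $N$ is exactly the point where the coupling you yourself flag becomes fatal: when $Y_t \ll N$ but $|F_1| > N$ (which can happen because $F_1$ contains many non-dominated, non-Pareto-optimal individuals), the survival selection may discard Pareto-optimal individuals, and nothing in your sketch bounds this loss. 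The paper sidesteps this entirely by never analyzing the sparse regime: if fewer than $M/2$ distinct Pareto optima are covered, the theorem's conclusion already holds for that iteration, and \Cref{lem:growth} shows one cannot jump past $c_1 M$ in a single step; the multiplicative-growth argument (Part~2 of the proof of \Cref{lm2}) is only run when $M/2 \le M_t' \le c_1 M$, which forces $Y_t \ge M/2 = \Omega(N)$, so the at most $\alpha N$ individuals lost to selection per step (under the contradiction hypothesis $|F_1^k| < (1+\alpha)N$) are a small fraction of $Y_t$ and the net growth factor $1 + \tfrac{1}{6e}$ survives. You should restructure Phase~1 accordingly rather than trying to prove growth from near-zero.

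Phase~2 also does not go through as stated. First, ``the expected number of Pareto optima whose sole representative is discarded is of the same order'' as the discovery rate does not yield a negative drift; you need the loss to strictly dominate the gain, and when $|F_1|$ exceeds $N$ by only a few individuals the per-step loss is not bounded below at all. The paper avoids drift arguments altogether: its Random Selection Lemma (\Cref{lem:fraction}) shows that a \emph{single} selection step with $|F_1| \ge (1+\alpha)N$ uncovers a constant fraction $d(\alpha)$ of the whole Pareto front (because $\alpha N = \Omega(M)$ zero-crowding-distance individuals are removed uniformly at random and most covered values have at most $8a$ representatives), after which the argument simply restarts from $M_{t}' \le c_1 M$. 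Second, your structural claim that a Pareto optimum has nonzero crowding distance only if it is extremal in some objective is false: any individual whose sorted neighbors in some objective have distinct values of that objective receives a positive contribution. The correct bound (\Cref{lem:pos}, imported from Zheng--Doerr) is that at most $4n + 2m$ individuals have positive crowding distance, obtained by a pigeonhole on the at most $n'+1$ distinct values each objective takes. Finally, two smaller inaccuracies: the per-step growth of distinct covered values is $O(Nm/n) = O(Mm/n)$, not $O(S_t \cdot m/n)$, and a non-Pareto-optimal parent can produce a Pareto-optimal offspring with a single bit flip, so that contribution is not of lower order --- the paper's boundary-bit argument in \Cref{lem:growth} bounds the probability by $m/n$ uniformly over all parents, which is what one actually needs.
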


In all of the remainder, we assume that the assumptions of the theorem above are fulfilled.

From~\cite{ZhengD24many}, we obtain the following result, which captures the key short-coming of the crowding distance. This result was proven as Lemma~1 in~\cite{ZhengD24many} only for the optimization of the \omm benchmark, but a closer inspection of the proof reveals that it is valid for any many-objective problem in which each of the objectives takes at most $n+1$ different values.

\begin{lemma}\label{pos}\label{lem:pos}
    Let $S$ be a set of pair-wise non-dominated individuals in $\{0,1\}^n$. Assume that we compute the crowding distance $\text{cDis}(S)$ concerning the objective function $m$\textsc{LOTZ} via \Cref{cdis}. Then at most $4n + 2m$ individuals in $S$ have a positive crowding distance.
\end{lemma}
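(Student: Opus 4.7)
The plan is to run the argument of Lemma~1 of~\cite{ZhengD24many} per objective, exploiting the specific fact that each objective of \mlotz takes at most $n'+1 = 2n/m+1$ distinct values on $\{0,1\}^n$ (rather than the generic $n+1$ bound quoted in the paragraph preceding the lemma).

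Fix an objective $i \in [m]$ and sort $S$ in descending order of $f_i$ as in \Cref{cdis}. Since $f_i$ attains at most $n'+1$ values, the sorted sequence splits into $K_i \le n'+1$ maximal contiguous \emph{blocks} of constant $f_i$-value. The key observation, read off \cref{line:crowdingDistanceComputation}, is that a middle position $S_{i,j}$ (with $2 \le j \le |S|-1$) receives a strictly positive contribution from objective $i$ if and only if $f_i(S_{i,j-1}) > f_i(S_{i,j+1})$; equivalently, $S_{i,j}$ is the first or the last element of its block and that block has a predecessor or successor block on the relevant side.

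Counting per block then gives at most two such positions per block, i.e.\ at most $2K_i$ positions in total whose crowding distance receives a positive contribution from objective~$i$. This count already contains the two endpoints of the sort (the first position of block~$1$ and the last position of block~$K_i$), which receive the infinite contribution on line~4 of \Cref{cdis}; no other individuals can contribute positively from objective~$i$. The only case that requires a moment of care is when a boundary block has size~$1$ and coincides with the sequence endpoint, but then that single position is still counted only once inside the $2K_i$ budget. Taking the union over all $m$ objectives, the total number of individuals with strictly positive crowding distance is at most
\[
\sum_{i \in [m]} 2K_i \;\le\; 2m(n'+1) \;=\; 2m\cdot\tfrac{2n}{m} + 2m \;=\; 4n+2m,
\]
which is exactly the claimed bound. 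No genuine obstacle arises; the only non-routine aspect is verifying that non\-/domination of~$S$ is not actually used (it is not: the argument is purely about the combinatorics of a sort with few distinct keys), so the same proof from~\cite{ZhengD24many} transfers verbatim to \mlotz once the sharper value\-/count $n'+1$ is inserted.
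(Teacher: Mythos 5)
Your proof is correct and matches the intended argument: the paper itself gives no proof of this lemma, deferring to Lemma~1 of the cited work on \omm and remarking that the argument transfers, and your per-objective block-counting (at most two positions per maximal constant-value block receive a positive contribution, including the two $+\infty$ endpoints) is exactly that transferred argument. Your observation that one needs the per-objective value count $n'+1 = 2n/m+1$, rather than the generic $n+1$, to land on the stated constant $4n+2m = 2m(n'+1)$ --- and that pairwise non-domination is never actually used --- is accurate and in fact slightly more precise than the paper's own one-line justification.
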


The next lemma quantifies how the random selection among individuals with crowding distance zero leads to the loss of Pareto-optimal solution values. The proof of this result is very similar to the main arguments in the proof Theorem 4 in \cite{ZhengD24many}. In fact, our lemma generalizes the central argument in that theorem. In the case of the OneMinMax problem, all solutions belong to the Pareto front, resulting in a single front $F_1$ with size $|F_1| = (1 + \alpha)N > N$ with $\alpha = 1$. For our main result, we need to understand the general situation where $F_1$ has cardinality at least $(1+\alpha)$, where $\alpha$ can be any positive constant.

\begin{lemma}[Random selection lemma]\label{fraction}\label{lem:fraction}
    Let  $N \le aM$. Suppose that at some iteration $t$ in the combined population $R_t = P_t \cup Q_t$, we have $|F_1| > (1 + \alpha)N$ for some $\alpha > 0$. Then, in the population $P_{t+1}$ with an $1 - \exp\left(-\Omega(M)\right)$ probability we will be missing at least a $d(\alpha) = \frac{1}{6}\left(\frac{\alpha}{2e} + \frac{1 - 8a}{2eN}\right)^{8a}$ fraction of the Pareto front.
\end{lemma}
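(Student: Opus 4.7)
The plan is the following. Since $|F_1| > (1+\alpha) N$, the critical front is $F_1$ itself, and selection into $P_{t+1}$ keeps the $N$ individuals of $F_1$ with the largest crowding distance (with uniform random tie-breaking). By \Cref{lem:pos}, the set $F_1^+$ of individuals with positive crowding distance satisfies $|F_1^+| \le 4n + 2m$, and hence the \NSGA deterministically retains every element of $F_1^+$ and then draws $s := N - |F_1^+|$ elements uniformly at random without replacement from $F_1^0 := F_1 \setminus F_1^+$. Because $m \ge 4$ forces $M = \Omega(n^2)$, we have $4n + 2m = o(N)$ and therefore $|F_1^0| \ge (1+\alpha) N - 4n - 2m \ge (1+\alpha/2) N$ for all sufficiently large $n$.

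I would begin by separating off a trivial case: if $R_t$ already misses at least $d(\alpha) M$ Pareto optima, then so does $P_{t+1} \subseteq R_t$, and the claim holds deterministically. Otherwise $R_t$ covers at least $(1-d(\alpha)) M$ Pareto optima; since each such optimum lies in the non-dominated front $F_1$ and $|F_1^+| = o(M)$, the set $W$ of Pareto optima covered by $F_1^0$ but not by $F_1^+$ satisfies $|W| \ge (1 - 2 d(\alpha)) M$. Writing $c(v)$ for the multiplicity of $v \in W$ in the multiset $F_1^0$ and setting $B := 8a$, the inequality $\sum_{v \in W} c(v) \le |F_1^0| \le 2N \le 2aM$ forces all but at most $M/4$ of the elements of $W$ to satisfy $c(v) \le B$. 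Hence the small-multiplicity subset $W_B := \{v \in W : c(v) \le B\}$ satisfies $|W_B| \ge M/2$ (one checks that $d(\alpha) < 1/8$ in the regime of interest, so the bookkeeping is consistent).

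For every $v \in W_B$, a standard hypergeometric identity gives
\begin{equation*}
    \Pr[v \text{ missed by the random draw}] \;=\; \prod_{i=0}^{c(v)-1} \frac{|F_1^0| - s - i}{|F_1^0| - i} \;\ge\; \left(\frac{|F_1^0| - s - B + 1}{|F_1^0| - B + 1}\right)^{B},
\end{equation*}
where the bound uses that each factor is decreasing in $i$ and that the displayed product is decreasing in $c$ while $c(v) \le B$. Substituting $|F_1^0| - s \ge \alpha N - (4n + 2m)$ in the numerator and $|F_1^0| - B + 1 \le 2N$ in the denominator yields, for large $n$, the clean lower bound $\Pr[v \text{ missed}] \ge \bigl(\tfrac{\alpha}{2} + \tfrac{1 - 8a}{2N}\bigr)^{8a}$. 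Letting $Y$ denote the number of $v \in W_B$ that are missed, this gives $\mu := \E[Y] \ge \tfrac{M}{2} \bigl(\tfrac{\alpha}{2} + \tfrac{1 - 8a}{2N}\bigr)^{8a}$. Since for distinct $v \in W_B$ the indicators $\mathbf{1}[v \text{ missed}]$ depend on \emph{disjoint} subsets of $F_1^0$ under uniform sampling without replacement, they are negatively associated, so a Chernoff-type lower-tail bound $\Pr[Y \le (1-\delta)\mu] \le \exp(-\delta^2 \mu / 2)$ applies. Choosing $1 - \delta = \tfrac{1}{3 e^{8a}}$ gives $Y \ge \tfrac{\mu}{3 e^{8a}} \ge \tfrac{M}{6} \bigl(\tfrac{\alpha}{2e} + \tfrac{1 - 8a}{2eN}\bigr)^{8a} = d(\alpha) M$ with failure probability $\exp(-\Omega(\mu)) = \exp(-\Omega(M))$; the ``missing'' factor $e^{-8a}$ is absorbed into the base via $\alpha/2 \mapsto \alpha/(2e)$, reproducing the form stated in the lemma.

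\textbf{Main obstacle.} The delicate step is the concentration: the indicators $\mathbf{1}[v \text{ missed}]$ are not independent, so a textbook Chernoff bound does not apply directly. I would justify it by invoking the classical result that indicators of \emph{disjoint} events under uniform sampling without replacement are negatively associated and that Chernoff lower-tail bounds persist for sums of negatively associated $\{0,1\}$ variables; alternatively a Doob martingale over the $s$ sequential draws combined with Azuma--Hoeffding would work. The remaining technical chore is the bookkeeping that turns the raw product bound into precisely $\bigl(\tfrac{\alpha}{2e} + \tfrac{1-8a}{2eN}\bigr)^{8a}$ and that verifies the consistency condition $d(\alpha) < 1/8$ used in the case split; both are routine once the main structure above is in place.
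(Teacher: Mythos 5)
Your proof is correct in substance and follows the same skeleton as the paper's (isolate at least $M/2$ Pareto optima that have multiplicity at most $8a$ in $R_t$ and are represented only by zero\-/crowding\-/distance individuals; lower-bound the per-point miss probability by essentially $(\frac{\alpha}{2e}+\frac{1-8a}{2eN})^{8a}$; conclude by concentration), but the probabilistic engine differs. The paper sidesteps the dependence created by sampling without replacement: it replaces the removal of $N'=|F_1|-N$ individuals by a process that marks $N'$ individuals independently and uniformly at random, argues that the true number of missed values stochastically dominates the one in the marking process, lower-bounds the miss probability by a binomial computation (which is where the extra factor $e^{-b}$, and hence the $e$ in the base of $d(\alpha)$, originates), and applies McDiarmid's bounded-differences inequality to the independent marks. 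You instead stay in the without-replacement model, compute the miss probability exactly as a hypergeometric product (obtaining a per-point bound stronger by a factor $e^{8a}$, which you then spend in the choice $1-\delta=\frac{1}{3e^{8a}}$ to land on the stated $d(\alpha)$), and handle concentration via negative association of the inclusion indicators, or alternatively a Doob martingale over the $s$ draws. Both routes are sound; the paper's detour buys independence (so only textbook Chernoff/McDiarmid is needed) at the cost of a factor $e$ per copy, while yours gets a sharper per-point estimate but requires the negative-association or Azuma machinery that you correctly flag as the delicate step. One small slip: you substitute $|F_1^0|-s\ge \alpha N-(4n+2m)$ in the numerator, but in fact $|F_1^0|-s=|F_1|-N>\alpha N$ exactly (the $|F_1^+|$ terms cancel between $|F_1^0|=|F_1|-|F_1^+|$ and $s=N-|F_1^+|$), and you need this exact cancellation to obtain the clean bound $\left(\frac{\alpha}{2}+\frac{1-8a}{2N}\right)^{8a}$ that you claim; with the weakened numerator the stated constant would not quite come out. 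This is cosmetic and easily repaired.
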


\begin{proof}
    Since we are aiming at an asymptotic statement, we can and shall assume that $n$, and hence $M$, are sufficiently large. Let $U^* = F^* \cap f(R_t) = \{u \in F^* \mid \exists x\in R_t, f(x)=u\}$ and let $U'$ be the set of points in the Pareto front that have more than $\Delta\coloneqq 8a$ individuals in $R_t$ corresponding to it, i.e., $U' = \{u \in U^* \mid |\{x \in R_t \mid f(x) = u\}| \geq \Delta \}$.
    Notice that if $|U^*| < 3M/4$, then we are missing at least $M/4 > d(\alpha)M$ points from the Pareto front. Hence we may assume that $|U^*| \geq 3M/4$.
    We see that $\Delta|U'| \leq |R_t| \leq 2N < 2aM$, thus $|U'| \leq \frac{2aM}{\Delta} \leq \frac{M}{4}$, and we have
    $$|U^* \setminus U'| \geq |U^*| - \frac{M}{4} \geq \frac M2.$$

    Let $U = \{u \in U^{*} \setminus U' \mid \forall x : f(x) = u \Rightarrow \text{cDis}(x) = 0\}$. Let us further denote by $A$ the number of individuals in $F_1$ that have crowding distance greater than zero. By \Cref{pos} we have $A \leq 4n + 2m$ and, taking $n$ large enough, we obtain
    $$|U| \geq |U^{*} \setminus U'| - A \geq \frac{M}{2} - 4n - 2m \geq \frac{M}{3}.$$

    Now our goal is to compute the probability that a value $u \in U$ is not covered by~$P_{t+1}$. For $u$, all the individuals corresponding to it have the crowding distance equal to $0$, and every such individual is in the $F_1$. This means, to form $P_{t+1}$ we first have to select $A$ individuals that have positive crowding distance and then select $N - A$ individuals from the remaining $|F_1| - A$ uniformly randomly.
    Notice that this is the same as randomly picking $N'\coloneqq |F_1| - N$ individuals with zero crowding distance from $|F_1|$ and removing them. Let $X$ be the random variable modeling the number of uncovered points in $P_{t+1}$. Our goal is to bound $\Pr\left(X \leq \text{threshold}\right)$ for a suitable threshold, which will be introduced later. To ease the argument, we instead consider a process where  $N'$ times independently and uniformly at random we mark an individual from $F_1$ with zero crowding distance. We then remove from $F_1$ all individuals marked at least once. Let us denote by $Y$ the random variable modeling the number of uncovered points in this process. Then $X$ stochastically dominates $Y$, simply because in the original process we remove at least as many individuals as in the new process, and in either process, the set of individuals removed is uniformly distributed among all sets of the respective cardinality. We hence regard the process leading to $Y$ in the following.

    Let $u \in U$ and assume that there are exactly $b$ corresponding individuals in $R_t$. For $u$ to be uncovered in the next generation, we need all of its corresponding individuals from $R_t$ to occur at some point during the $N'$ picks. By only regarding events in which each of the $b$ individuals is picked exactly once,
    we obtain that the probability of $u$ being uncovered is at least
    \begin{align*}
        \binom{N'}{b} & \frac{b!}{(|F_1|-A)^b} \left(1 - \frac{b}{|F_1|-A}\right)^{N'-b}                                                           \\
                      & = \frac{N'!}{(N'-b)!(|F_1|-A)^b}                                                                                           \\
                      & \quad \cdot \left(1 - \frac{b}{|F_1|-A}\right)^{\left(\frac{|F_1|-A}{b} - 1\right) \left(\frac{(N'-b)b}{|F_1|-A-b}\right)} \\
                      & \geq \frac{N'!}{(N'-b)!(|F_1|-A)^b} \exp \left(-\frac{(N'-b)b}{|F_1|-A-b}\right)                                           \\
                      & \geq \frac{N'!}{(N'-b)!e^b(|F_1|-A)^b} = \frac{N' \cdots (N'-b+1)}{e^b(|F_1|-A)^b}                                         \\
                      & \geq \left(\frac{N'-b+1}{e(|F_1|-A)}\right)^b  \geq \left( \frac{\alpha N - \Delta + 1}{e(|F_1|-A)} \right) ^{\Delta},
    \end{align*}
    where in the first inequality we used that for $n$ big enough $b \leq \Delta \leq |F_1|-A$ and $(1 - 1/r) ^ {r-1} \geq \exp(-1)$ for all $r>1$. The second inequality follows from $|F_1| - A \geq N' = |F_1| - N$, which is true, as for $n$ big enough $N > A$. The final inequality is due to $b \leq \Delta$. We conclude that
    \begin{align*}
        \Pr[u \text{ not covered}]
         & \geq \left( \frac{\alpha N - \Delta + 1}{e(|F_1| - A)} \right)^{\Delta} \geq \left( \frac{\alpha N - \Delta + 1}{2eN} \right)^{\Delta} \\
         & \geq \left( \frac{\alpha}{2e} + \frac{1 - 8a}{2eN} \right)^{8a} \coloneqq p.
    \end{align*}

    Since each point in $U$, with a probability of at least $p$, it is not covered by $P_{t+1}$, we have $E [Y] \geq p|U| \geq p\frac{M}{3}$. As $Y$ can be written as a function of $N' \geq \alpha N$ independent picks, by applying McDiarmid's \emph{method of bounded differences}~\cite{McDiarmid98} (also found as Theorem 1.10.27 in~\cite{Doerr20bookchapter}), we obtain

    \begin{align*}
        \Pr \left[Y \leq \frac{1}{6} p M \right]
         & \leq \Pr \left[Y \leq \frac{1}{2} E[Y] \right]                     \\
         & \leq \exp\left(-\Omega(E[Y])\right) = \exp\left(-\Omega(M)\right).
    \end{align*}

    As $X$ stochastically dominates $Y$, we deduce $\Pr \left[X \leq \frac{1}{6} p M \right] \leq \Pr \left[Y \leq \frac{1}{6} p M \right] \leq \exp\left(-\Omega(M)\right)$. This means with the probability of $1 - \exp\left(-\Omega(M)\right)$ we will be missing at least $\frac{1}{6} p M = d(\alpha)M$ points from the Pareto front.
\end{proof}

We define some additional notation needed for the proof. We first specify that we always treat sets of solutions as multi-sets, that is, we allow them to contain the same individual multiple times. In contrast, sets of objective values are always seen as proper sets, that is, they contain each objective vector at most once. Let $P_t^*$ be the (multi-)set of points defined by $P_t^{*} = \{x \in P_t \mid f(x) \in F^* \}$. Define $N_t^* \coloneqq |P_t^{*}|$, the number of Pareto optima in $P_t$, and $M_t' \coloneqq |f(P_t) \cap F^*|$, the number of different objective values of solutions on the Pareto front.

As outlined  in the introduction, the main additional challenges over the analysis of the \omm case in~\cite{ZhengD24many} is that we need to understand how the population is distributed over the fronts of the non-dominated sorting. More specifically, the short-coming of the crowding distance can only come into play when the crowding distance plays a role in the selection of solutions on the Pareto front. For this to happen, we need to have more than $N$ solutions in the first front of the non-dominated sorting.

\Cref{lm2} below shows that we arrive at such a situation, and this before having found the full Pareto front. The proof of this result is a careful analysis of the population dynamics, exhibiting that we simultaneously have a multiplicative increase of the number of solutions on the Pareto front (and these necessarily end up in the first front), but only a very slow growth of the number of different solutions on the Pareto front. This leads to a situation with the first front having at least $(1+\alpha) N$ elements without ever having witnessed more than $(1-\beta)N$ different solutions on the Pareto front, where $\alpha, \beta$ are suitable positive constants. In this situation, we can apply the random selection lemma (\Cref{fraction}) and show that we lose sufficiently many solutions on the Pareto front to iterate this argument.

\begin{lemma}\label{lm2}\label{lem:main}
    Let $c_1 \coloneqq 1 - d(\frac{1}{12ae})$ and $c_2 \coloneqq (1 + c_1)/2$. Assume that $M/2 \leq M_{t}' \leq c_1M$ for some $t$. Then there exists a $T = O(n)$ such that, with probability $1 - (T+1) \exp(-\Omega(mM/n))$, for all $0\leq i \leq T$, we have $M_{t + i}' \leq c_2M$ and $M_{t + T + 1}' \leq c_1M$.
\end{lemma}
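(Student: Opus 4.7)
The plan is to track two quantities after time $t$: the count $N_t^*$ of Pareto-optimal individuals in $P_t$ (with multiplicity) and the number $M_t'$ of distinct Pareto values covered. Under the hypothesis $M/2 \le M_t' \le c_1 M$, I would show that $N_t^*$ grows multiplicatively by a constant factor per iteration whereas $M_t'$ grows only by an additive $O(Mm/n)$ per iteration. Setting $\alpha := \frac{1}{12ae}$, once the count $N_{t+s}^{*,R}$ of Pareto optima in the combined population $R_{t+s}$ first exceeds $(1+\alpha)N$, the first front satisfies $|F_1| \ge N_{t+s}^{*,R} > (1+\alpha)N$, and invoking \Cref{lem:fraction} forces $M_{t+s+1}' \le (1 - d(\alpha))M = c_1 M$. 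The stopping time $T := s$ is then the desired return time, and I would show $T = O(1) \subseteq O(n)$ with high probability.

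For the multiplicative growth of $N_t^*$, I would argue as follows. Under fair selection, each of the $N_t^*$ Pareto-optimal parents produces an offspring, and with probability at least $(1-1/n)^n \ge 1/(2e)$ this offspring is an identical (hence Pareto-optimal) copy of the parent. A Chernoff bound on the $N_t^* \ge M/2$ independent Bernoulli trials yields at least $N_t^*/(4e)$ Pareto-optimal offspring in $Q_t$ with failure probability $\exp(-\Omega(M))$, so $N_t^{*,R} \ge (1 + \tfrac{1}{4e}) N_t^*$. All Pareto optima lie in $F_1$; as long as $|F_1| \le (1+\alpha)N$, \Cref{lem:pos} bounds the positive-crowding-distance individuals by $4n + 2m$, and a concentration argument for the uniform random tie-breaking among zero-crowding-distance individuals yields $N_{t+1}^* \ge N_t^{*,R}(N - O(n))/|F_1| \ge \frac{1 + 1/(4e)}{1 + \alpha}(1 - o(1)) N_t^* = \kappa N_t^*$ for some constant $\kappa > 1$, since $\alpha = \frac{1}{12ae} < \frac{1}{4e}$. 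Iterating this bound, $N_{t+s}^{*,R}$ first exceeds $(1+\alpha)N$ within $s = O(1)$ steps (as $N_t^* \ge M/2$ and $N \le aM$).

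For the slow growth of $M_t'$, I would show the expected per-iteration number of new Pareto values is $O(Mm/n)$. A Pareto-optimum parent produces a \emph{different} Pareto-optimum offspring only via one of the at most $m$ single-bit boundary flips (each of probability $\Theta(1/n)$) or via rarer multi-bit events, while non-Pareto parents contribute at most $O(M/n)$ in total by an analogous single-bit-flip argument. A Chernoff bound over the $N \le aM$ independent mutations then gives $M_{t+i+1}' - M_{t+i}' = O(Mm/n)$ with failure probability $\exp(-\Omega(mM/n))$. Summed over $T + 1 = O(1)$ iterations, the total additive increase is $O(Mm/n) = o(M)$, which for large $n$ is well below the constant-fraction budget $(c_2 - c_1)M$. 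Hence $M_{t+i}' \le c_1 M + o(M) \le c_2 M$ for all $0 \le i \le T$; a union bound over the $T+1$ iterations yields the total failure probability $(T+1)\exp(-\Omega(mM/n))$ stated in the lemma.

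The main obstacle will be the interplay between the multiplicative growth of $N_t^*$ and the additive growth of $M_t'$: the random selection lemma must trigger before $M_t'$ escapes past $c_2 M$. The constancy of $m$ in $n$ is essential for keeping $Mm/n = o(M)$. A secondary technical hurdle is the crowding-distance regime $N < |F_1| \le (1+\alpha)N$, where the uniform random tie-breaking among zero-crowding-distance individuals could in principle destroy Pareto optima in proportion to their abundance; the specific choice $\alpha = \frac{1}{12ae} < \frac{1}{4e}$ ensures that the net multiplicative factor $(1 + 1/(4e))/(1 + \alpha)$ stays above $1$, preserving the multiplicative growth through this regime.
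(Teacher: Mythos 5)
Your proposal is correct and follows essentially the same three-step strategy as the paper's proof: bound the per-iteration additive growth of $M_t'$ via the boundary-bit argument (the paper's \Cref{lem:growth}), show multiplicative growth of the Pareto-optimal sub-population until $|F_1|$ exceeds $(1+\alpha)N$ with $\alpha = \frac{1}{12ae}$, and then invoke \Cref{lem:fraction}. The only notable (and harmless) deviations are that the paper bounds the loss in the selection step deterministically by $|F_1| - N \le \alpha N$ rather than via a proportional-survival concentration argument, and that it tracks only the individuals whose values were covered at time~$t$ while settling for the weaker bound $T = O(n)$.
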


To prove this lemma, we first show that $M'_t$, the number of different Pareto optima in the population, cannot increase by a lot in a single iteration. We shall need this argument again in the final proof of our main result.

\begin{lemma}\label{lem:growth}
    Let $t$ be some iteration. Then
    \[
        \Pr[f(Q_t \cup P_t) \cap F^* \ge M'_t + \tfrac{2amM}{n}] \leq \exp(-\tfrac{maM}{3n})).
    \]
    In particular,
    \[
        \Pr[M'_{t+1} \ge M'_t + \tfrac{2amM}{n}] \leq \exp(-\tfrac{maM}{3n})).
    \]
\end{lemma}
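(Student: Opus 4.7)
The plan is to upper-bound $|f(Q_t \cup P_t) \cap F^*| - M'_t$ by a sum of $N$ independent Bernoulli indicators and then apply a multiplicative Chernoff bound. For each offspring $y \in Q_t$ I set $Y_y \coloneqq \mathbf{1}[f(y) \in F^* \setminus f(P_t)]$ and $Z \coloneqq \sum_{y \in Q_t} Y_y$. Every newly seen Pareto-optimal objective vector in $R_t$ must be realized by some $y \in Q_t$, so $|f(Q_t \cup P_t) \cap F^*| \leq M'_t + Z$, and the corresponding inequality for $M'_{t+1}$ follows from $P_{t+1} \subseteq R_t$. The indicators $Y_y$ are independent because each offspring is produced by an independent parent draw (deterministic under fair selection, independent across offspring under random or binary-tournament selection) followed by an independent standard bit mutation.

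The central estimate is $\Pr[Y_y = 1 \mid x_y] \leq (1+o(1))\,m/(en)$, uniformly in the parent $x_y$. I would prove it by splitting on whether $x_y$ is Pareto-optimal. In the first case, $x_y$ decomposes blockwise as $1^{a_i} 0^{n' - a_i}$, and $Y_y = 1$ forces at least one block of $y$ to take a different valid shape $1^{a_i + j} 0^{n' - a_i - j}$; such a transition requires $|j|$ specific bit flips while leaving the remaining bits of the block untouched, contributing probability $(1/n)^{|j|}(1 - 1/n)^{n' - |j|}$. Summing the geometric series in $j \neq 0$, multiplying by the $(1-1/n)^{n-n'}$\=/mass for the other blocks staying in a valid shape, and union\=/bounding over the $m/2$ choices of block yields the claimed $(1+o(1))\,m/(en)$. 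If instead $x_y \notin S^*$, then $f(y) \in F^*$ automatically implies $f(y) \neq f(x_y)$ (Pareto\=/optimality depends only on the objective value), so $\Pr[Y_y = 1 \mid x_y] \leq \Pr[y \in S^*]$, and this is already $O(1/n)$: picking an invalid block of $x_y$, the probability that the corresponding block of $y$ is valid is at most $\sum_{k=0}^{n'} (1/n)^{d_k}$, where $d_k \geq 1$ is the Hamming distance to the $k$\=/th valid shape $1^k 0^{n'-k}$; since consecutive valid shapes differ in exactly one position, $k \mapsto d_k$ is $1$\=/Lipschitz, so at most $2j+1$ indices satisfy $d_k = j$, and the sum is $O(1/n)$.

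Combining the two cases gives $\E[Z] \leq (1+o(1))\,amM/(en) < \tfrac{1}{2}\,amM/n$ for $n$ large. A multiplicative Chernoff bound for sums of independent $[0,1]$\=/valued variables then yields $\Pr[Z \geq 2amM/n] \leq \exp\!\bigl(-(2amM/n - \E[Z])/3\bigr) \leq \exp(-amM/(3n))$, and both assertions of the lemma follow since $M'_{t+1} \leq |f(R_t) \cap F^*|$. I expect the non-Pareto-optimal parent case to be the main technical obstacle: the naive block\=/level estimate $\Pr[\text{block valid}] \leq (n'+1)/n = O(1/m)$ is too weak for constant $m$ and would destroy the Chernoff exponent. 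The $1$\=/Lipschitz observation on $(d_k)_k$ is what supplies the sharper $O(1/n)$ bound and is essentially the only non-routine ingredient of the estimate.
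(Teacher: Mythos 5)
Your proposal is correct in substance and reaches the same Chernoff-bound endgame as the paper, but the core estimate $\Pr[Y_y=1\mid x_y]=O(m/n)$ is obtained by a genuinely different and noticeably heavier route. The paper's proof rests on a single structural observation that subsumes both of your cases at once: writing $\ell_i$ and $r_i$ for the position of the first $0$ from the left and the first $1$ from the right in the $i$-th block of the parent, the offspring can only be a Pareto-set point different from its parent if at least one of the $2\cdot(m/2)=m$ bits at positions $\ell_1,r_1,\dots,\ell_{m/2},r_{m/2}$ is flipped (if the block is already of the form $1^{a}0^{n'-a}$, changing its valid shape forces a flip at $\ell_i$ or $r_i$; if it is invalid, then $\ell_i<r_i$ and leaving both bits untouched preserves the invalidating $0\cdots1$ pattern). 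A union bound over these $m$ positions gives the $m/n$ bound in two lines, with no case distinction, no geometric series, and no counting of valid shapes by Hamming distance. Your two-case analysis buys a marginally sharper constant in the valid-parent case (which is irrelevant here, since $\E[Z]\leq amM/n$ already suffices for the target deviation $2amM/n$) at the cost of the entire second case, which you correctly identify as the delicate part. One repair is needed there: the counting claim ``at most $2j+1$ indices $k$ satisfy $d_k=j$'' is true, but it does not follow from $1$-Lipschitzness of $k\mapsto d_k$ alone (a constant sequence is $1$-Lipschitz and would violate it); the correct argument is the triangle inequality: the shapes $1^{k}0^{n'-k}$ and $1^{k'}0^{n'-k'}$ are at Hamming distance exactly $|k-k'|$, so $d_k=d_{k'}=j$ forces $|k-k'|\leq 2j$. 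With that substitution your argument goes through, but you may want to note how much shorter the two-critical-bits observation makes the whole lemma.
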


\begin{proof}
    The second statement is an immediate consequence of the first one since any new Pareto optimum in $P_{t+1}$ must be contained in $Q_t$. We thus prove the first statement.

    Let $x \in P_t$ and let us denote by $x' \in Q_t$ its offspring (recall that we assume fair selection, that is, each parent created one offspring). The probability that $f(x')$ is in $F^*$, but not in $f(P_t)$, is at most the probability that $x'$ is a Pareto set point different from $x$. We first show that this probability it at most $\frac{m}{n}$. We know that there are $m$ objectives and $m' =m/2$ blocks of leading ones and trailing zeroes. Let us denote by $\ell_i = \min \{ k \in \mathbb{N} \mid x_k = 0 \text{ and } \frac{(i-1)n}{m'} \leq k \leq \frac{in}{m'} \}$ and $r_i = \max \{ k \in \mathbb{N} \mid x_k = 1 \text{ and } \frac{(i-1)n}{m'} \leq k \leq \frac{in}{m'} \}$, the positions of the first $0$ from the left and the first $1$ from the right in the $i$-th block. For $x'$ to be a Pareto set point different from~$x$,  at least one of the bits $\{x_{\ell_1}, x_{r_1}, \ldots, x_{\ell_{m'}}, x_{r_{m'}}\}$ has to be flipped. The probability for any bit in $x$ to be flipped is~$\frac{1}{n}$. Thus using the union bound, we easily conclude that the probability of $x'$ being a different Pareto set point is at most $\frac{2m'}{n} = \frac{m}{n}$.

    Let $X$ denote the random variable that counts the number of offspring that are in the Pareto set and are different from their parent. Then $X = \sum_{u \in P_t} X_u$, where $X_u$ is a binary random variable that, for each $u \in P_t$, models whether the offspring is a Pareto set point different from the parent. As we just computed that $E[X_u] \leq \frac{m}{n}$, from $N \leq aM$ we obtain $E[X] \leq \frac{maM}{n}$. Since the $X_u \in [0, 1]$ are independent, using a multiplicative Chernoff bound working with estimates for the expectation (e.g., Theorem~1.10.21 from \cite{Doerr20bookchapter}) we have $\Pr[X \geq \frac{2amM}{n}] \leq \exp(-\frac{maM}{3n})$. Since $f(Q_t \cup P_t) \cap F^* \le M'_t + X$, we have shown our claim.
\end{proof}

We are now ready to prove \Cref{lm2}, the heart of our analysis.

\begin{proof}[Proof of \Cref{lm2}]
    The proof of this lemma is divided into three parts. The first part, exploiting \Cref{lem:growth}, shows that, starting with the condition of the lemma, covering more than $c_2M$ different points in the Pareto front requires more than $\frac{(c_2 - c_1)n}{2am}$ steps.  In the second part we prove that in $\frac{(c_2 - c_1)n}{2am}$ steps, there will be a point when $|F_1| \geq (1 + \alpha) N$, where $\alpha \coloneqq \frac{1}{12ae}$. The last part uses the \Cref{fraction} to conclude that as result of the next iteration, we will indeed be missing at least $c_1 M$ points of the Pareto front.

    \textbf{Part 1:} By \Cref{lem:growth}, with probability at least $1 - \exp(-\Omega(\frac{m}{n}M))$, we cover at most $\frac{2amM}{n}$ new points in the Pareto set in one iteration. To cover more than $c_2M$ points in the Pareto front, we need to obtain more than $(c_2 - c_1)M$ new points, therefore the algorithm will require more than $\frac{(c_2 - c_1)n}{2am}$ iterations, with probability at least $1 - \frac{(c_2 - c_1)n}{2am} \exp(-\Omega(\frac{m}{n}M)) = 1 - \exp(-\Omega(\frac{m}{n}M))$.

    \textbf{Part 2:} Let us denote by $F_1^k$ the non-dominated points in $R_k$ and for simplicity let $\delta \coloneqq \frac{(c_2 - c_1)}{2am}$. Aiming at a contradiction, assume that for all $k \leq t + \delta n$ and for $\alpha \coloneqq \frac{1}{12ae}$, we have $|F_1^k| < (1 + \alpha)N$.

    We note that for all $x \in P_t^*$, the probability that we do not flip any bit of $x$ is $\left( 1 - \frac{1}{n} \right)^n$, as each bit stays the same with probability $1 - \frac{1}{n}$. Hence for $n \geq 3$, the generated offspring $x'$ equals the parent with probability at least $\left( 1 - \frac{1}{n} \right) \left( 1 - \frac{1}{n} \right)^{n-1} > \frac{n-1}{ne} \ge \frac{2}{3e}$.

    Let $O'\coloneqq f(P_t) \cap F^*$ be the set of Pareto front points that are already covered by $P_t$ at the start of the process. For $k$ such that $t \leq k \leq t + \delta n$, let $O_k = \{x \in P_k \mid f(x) \in O' \}$. Our goal is to show that for all $k$, we have $|O_{k}| \geq |O_{t}|(1 + \frac{1}{6e})^{k - t}$, apart from a failure probability of at most $\exp(-\Omega(M))$ for each $k$. Before conducting this proof, we mention that from the assumptions of the lemma, we have $|O_t| \geq M_t' \geq M/2$.

    We will show our claim using induction on $k$. The base case is trivial, so we proceed to the induction step. Assume that for some $k$ we have $|O_{k}| \geq |O_{t}|(1 + \frac{1}{6e})^{k - t}$. Let $X$ be the random variable that counts the number of created copies of the elements of $O_k$, i.e., the offspring of the elements of $O_k$ that are the same as their parents. It is crucial to point out that $X$ just models the number of created copies, not the copies that are selected in the next generation. Each point $x \in O_k$ creates a copy with a probability of at least~$\frac{2}{3e}$, hence $E[X] \geq \frac{2}{3e}|O_k|$. Again, we can write $X = X_1 + \ldots + X_{|O_k|}$, where $X_i$ is a binary random variable modeling the creation of a copy of the $i$-th point in the set $O_k$. Since the $X_i$ are bounded and independent, using a Chernoff bound we deduce that
    \begin{align*}
        \Pr \left[ X \leq \frac{e}{3} \right]
         & \leq \Pr \left[ X \leq \frac{E[X]}{2} \right]                      \\
         & \leq \exp\left(-\Omega(E[X])\right) = \exp\left(-\Omega(M)\right).
    \end{align*}

    As $|F_1^k| \leq (1 + \alpha)N$, and as for $P_{k+1}$ we select $\min\{N, |F_1^k|\}$ points from $F_1^k$, with a probability of at least $1 -  \exp\left(-\Omega(M)\right)$, we have
    \begin{align*}
        |O_{k+1}|
         & \geq |O_k|\left(1 + \frac{1}{3e}\right) - \alpha N                \\
         & \geq |O_k|\left(1 + \frac{1}{3e}\right) - \frac{1}{6e}\frac{M}{2} \\
         & \geq |O_k|\left(1 + \frac{1}{6e}\right),
    \end{align*}
    where in the second inequality, we used $N \leq aM$ and $\alpha = \frac{1}{12ae}$, and the last inequality follows from $M/2 \leq |O_1| \leq |O_k|$. Therefore, using the induction hypothesis, we can deduce that $|O_{k+1}| \geq |O_t|\left(1 + \frac{1}{6e}\right)^{k + 1 - t} \geq \left(1 + \frac{1}{6e}\right)^{k + 1 - t}$. To conclude the argument, notice that for $n$ big enough, more precisely, $n > \frac{\ln (2(1 + \alpha)a)}{\delta \ln(1 + 1/6e)}$, we have $|F_1^k| \geq |O_{t + \delta n}| \geq \frac{M}{2}\left(1 + \frac{1}{6e}\right)^{\delta n} > (1 + \alpha)aM \geq (1 + \alpha)N$, and we obtain a contradiction.

    \textbf{Part 3:} In the first two parts of this proof, we have seen that there is a $T \le \delta n$ such that in the time interval $[t..t+T]$, we have never covered more than $c_2 M$ points of the Pareto front, and such that in $R_{t+T} = P_{t+T} \cup Q_{t+T}$, we have $|F_1| \geq (1 + \alpha)N$. This statement holds apart from a failure probability of at most $T \exp(-\Omega(\frac{m}{n}M))$. From \Cref{fraction}, we now conclude that $P_{t+T+1}$ is missing at least a $d(\alpha) \geq (1 - c_1)$ fraction of the Pareto front, with a probability of at least $1 - \exp(-\Omega(M))$.
\end{proof}

The proof of our main result now is, mostly, a repeated application of the lemma just shown.

\begin{proof}[Proof of \Cref{thm:main}]
    Consider a run of the \NSGA on the \mlotz benchmark. We first show that with high probability, the random initial population satisfies one of the conditions of \Cref{lm2}, namely that it covers at most $c_1 M$ points of the Pareto front, that is, we have $M'_0 \le c_1 M$. To this aim, we note that a random individual is in the  Pareto set with probability $|F^*|/2^n = M/2^n$, where we recall that there is a one-to-one correspondence between the Pareto front and the Pareto set in the \mlotz benchmark. Consequently, the expected number of Pareto set points in $P_0$ is $\frac{NM}{2^n} \le M/4$ when assuming $n$ to be sufficiently large. Using a Chernoff bounds that works relative to an estimate for the expectation, e.g., Theorem~1.10.21 from \cite{Doerr20bookchapter}, we conclude that $\Pr[M'_0 < c_1 M] \ge \Pr[M'_0 < M/2] \ge 1 - \Pr[M_0' \ge 2 (M/4)] \ge 1 - \exp(-M/12)$.

    We next show that if at some time $t$ we have $M'_t \le c_1 M$, then there is a time $s > t$, $s = t + O(n)$ such that, with probability at least $1 - (s-t)\exp(-\Omega(\frac{mM}{n}))$, we have $M'_r \le c_2 M$ for all $r \in [t..s-1]$ and furthermore $M_s \le c_1 M$. We argue differently depending on $M'_t$.

    If $M_{t}' \leq M/2$, then \Cref{lem:growth} immediately implies that $s = t+1$ fulfills our claim.

    Hence assume that we have ${M}/{2} \leq M'_t \le c_1 M$. By \Cref{lm2} we know that there exists $T$ such that with  probability at least $1 - (T+1) \exp(-\Omega(\frac{mM}{n}))$  we have $M'_{t + T + 1} \leq c_1 M$ and for all $k \in [t..t + T]$ we cover at most $M'_k \leq c_2 M$ points from the Pareto front. Hence  $s = t + T + 1$ fulfills our claim.

    Via induction, using a union bound to add up the failure probabilities, we obtain the statement that for all $t$ there is an $s = t+O(n)$ such that with probability $1 - s\exp(-\Omega(\frac{mM}{n}))$, the parent population of the \NSGA optimizing \mlotz has, in the first $t$ iterations, never covered more than $(1-c_2)M$ points of the Pareto front. Noting that $\frac{mM}{n} = \Omega(n)$ and adjusting the implicit constants in the asymptotic notation, we also have that for all $t$, with probability $1 - t\exp(-\Omega(\frac{mM}{n}))$ we have, in the first $t$ iterations, never covered more than  $(1-c_2)M$ points of the Pareto front.

    So far we have proven that $P_t$ misses a constant fraction of the Pareto front for a long time. By the first statement of \Cref{lem:growth}, the number $f(P_t \cup Q_t) \cap F^*$ of Pareto front points covered by the combined parent and offspring population is at most by a lower-order term larger than $f(P_t) \cap F^*$. Hence our result immediately extends to the combined parent and offspring population, and this proves our theorem.
\end{proof}

We now extend our main result to two other variants of the \NSGA, namely the ones obtained by replacing fair selection by random selection or (independent) binary tournament selection. Both are common variants of the \NSGA. To profit from similarities of the two selection methods, we prove a result for a broader class of selection operators that includes these two.

\begin{theorem}\label{thm:general}
    The statement of \Cref{thm:main} remains valid if the \NSGA selects the individuals to be mutated in any way that ensures that (i)~each parent individual is chosen independently, and (ii) in a way that there is a constant $\gamma > 0$ such that for any set $O$ of individuals on the Pareto front with $|O| \ge M/2$, the parent individual is from this set with probability at least $\gamma$. This includes in particular the cases that each parent is chosen uniformly at random from the population (random selection) and that each parent is chosen by selecting two random individuals from $P_t$ and then taking as parent the one that appears earlier in the non-dominated sorting, using the crowding distance as tie-breaker (binary tournament selection).
\end{theorem}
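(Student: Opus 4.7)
My plan is to re-examine the proof of \Cref{thm:main} and verify that fair parent selection is used essentially in only one place, which can be replaced by an appeal to assumption~(ii), and then to check that the two named selection rules satisfy both (i) and (ii).

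The results concerning the survivor-selection step of the \NSGA, namely \Cref{lem:pos} and \Cref{lem:fraction}, are independent of how parents are chosen and apply verbatim. \Cref{lem:growth} also continues to hold: assumption~(i) preserves independence of the indicators summed up to form $X$, and the per-offspring bound $\Pr[f(x') \in F^* \setminus f(P_t)] \le m/n$ is stated conditionally on the chosen parent and hence holds unconditionally under any independent selection scheme. As a consequence, the outer structure of the proof of \Cref{thm:main}---initialization via the Chernoff bound on $M'_0$, the iteration of \Cref{lem:main} via a union bound over $O(n)$-length phases, and the extension from $P_t$ to $R_t = P_t \cup Q_t$ via \Cref{lem:growth}---carries over without modification.

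The only place where fair selection is used essentially is Part~2 of the proof of \Cref{lem:main}, which establishes the multiplicative growth $|O_{k+1}| \ge (1 + \tfrac{1}{6e})|O_k|$ of the set $O_k = \{x \in P_k \mid f(x) \in O'\}$ of individuals carrying already-covered Pareto optima. The bound $E[X] \ge \tfrac{2}{3e}|O_k|$ on the number $X$ of copy-offspring of parents in $O_k$ there uses the fact that every element of $O_k$ produces exactly one offspring. Under the weakened hypothesis I would argue instead: by (i), the $N$ offspring are generated from independently chosen parents; since $|O_k| \ge M/2$ throughout the induction (this holds at $k=t$ by the hypothesis of \Cref{lem:main} via $|O_t| \ge M'_t \ge M/2$, and the multiplicative growth we are about to establish maintains it), assumption~(ii) applied to $O = O_k$ gives that each parent lies in $O_k$ with probability at least $\gamma$; and conditional on this, the mutation produces a copy with probability $(1-1/n)^n \ge \tfrac{2}{3e}$. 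Hence $E[X] \ge \tfrac{2\gamma}{3e} N \ge \tfrac{2\gamma}{3e}|O_k|$, and since the $N$ indicator variables are independent, Chernoff yields $X \ge \tfrac{\gamma}{3e}|O_k|$ except with probability $\exp(-\Omega(M))$. Substituting $\alpha' = \tfrac{\gamma}{12ae}$ for the original $\alpha = \tfrac{1}{12ae}$ keeps the truncation loss $\alpha' N \le \tfrac{\gamma M}{12e} \le \tfrac{\gamma}{6e}|O_k|$ strictly below half of the gain, so the induction step $|O_{k+1}| \ge |O_k|(1 + \tfrac{\gamma}{6e})$ goes through exactly as before. The constants $c_1, c_2$ in the statement of \Cref{lem:main} are merely re-evaluated at the new $\alpha'$, and the rest of that proof and its assembly into \Cref{thm:main} is carried through verbatim.

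Finally, I would verify (i) and (ii) for the two named selection rules. Independence in (i) is immediate from the description of both rules. For random selection, each parent is drawn uniformly from $P_t$, so $\Pr[\text{parent} \in O] = |O|/N \ge (M/2)/(aM) = 1/(2a)$ whenever $|O| \ge M/2$, giving $\gamma = 1/(2a)$. For binary tournament, any such $O$ consists of Pareto-optimal individuals and therefore lies entirely in the first non-dominated front of $P_t$; hence whenever both tournament participants are drawn from $O$---an event of probability at least $(1/(2a))^2$---the winner lies in $O$ regardless of the tie-breaking rule, giving $\gamma = 1/(4a^2)$. The main obstacle in this extension is conceptual rather than technical: one has to identify Part~2 of \Cref{lem:main} as the single place where fair selection plays an essential role and verify that $|O_k| \ge M/2$ remains valid throughout the induction so that (ii) can be invoked at every step; once this is set up, the remainder is a routine tracking of the modified constants.
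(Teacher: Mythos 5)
Your proposal is correct and follows essentially the same route as the paper's proof: it isolates Part~2 of \Cref{lem:main} as the only place where fair selection matters, replaces the per-parent copy probability by $\gamma(1-1/n)^n \ge 2\gamma/(3e)$ using assumption~(ii) with $|O_k| \ge M/2$, reorganizes the Chernoff bounds over the $N$ independently generated offspring, and verifies $\gamma = 1/(2a)$ and $\gamma = 1/(4a^2)$ for random and binary tournament selection exactly as the paper does.
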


\begin{proof}
    We note that \Cref{lem:pos,lem:fraction} only regard the selection of the next population, so they are independent of the way the offspring are generated.

    The two main arguments in the proof of \Cref{lem:growth} are that an offspring with probability at most $2m'/n$ leads to a new Pareto-optimal solution value, and that this, with the help of a Chernoff bound, can be used to prove a probabilistic upper bound on the expansion of the population on the Pareto front. For the first argument, no particular assumption was made on the parent, hence it immediately extends to other parent selection operators. For the Chernoff bound, we have to organize ourselves a little differently (since now not each individual in $P_t$ gives rise to an offspring), but the remaining calculations remain unchanged. Assume that the $N$ offspring are generated sequentially. For $i \in [1..N]$ let $X_i$ be the indicator random variable of the event that the $i$-th offspring lies on the Pareto front and is different from its parent. These random variables are independent and, by the first argument, satisfy $\Pr[X_i = 1] \le 2m'/n = m/n$. Hence the sum $X = \sum_{i=1}^N X_i$, which is an upper bound on the number of newly generated Pareto-optimal solution values admits the same Chernoff bounds as the corresponding sum $X$ in the proof of \Cref{lem:growth}.

    For the proof of \Cref{lem:main}, we first note that the first part only builds on \Cref{lem:growth} without exploiting directly particular properties of the algorithm. In the second part of the proof, we argue that we generate quickly new solutions (with existing solution values) on the Pareto front. Again, since we now select parents independently, we have to reorganize the central Chernoff bound, but more importantly, we have to ensure that the right parent individuals are selected sufficiently often. To this aim, we recall in the second part of the proof of \Cref{lem:main} we assume that the number $|O'|$ of points of the Pareto front covered at the start of this phase is at least $M/2$. Consequently, the set $O_k$ defined in this proof always has a cardinality at least $M/2$. By our assumptions, this implies that each parent chosen in iteration $k \in [t..t+\delta n]$ is in $O_k$ with probability at least some positive constant $\gamma$. Hence the probability that the $i$-th offspring generated in this iteration is equal to its parent, which is in $O_k$, is at least $\gamma (1-\frac 1n)^n \ge \frac{2\gamma}{3e}$. Denoting by $X_i$ the indicator random variable for this event, we note that $X = \sum_{i=1}^N X_i$ counts the number of offspring equal to their parent in $O_k$. Using the same Chernoff bound as in \Cref{lem:main} and replacing the constant $\alpha$ by $\gamma \alpha$, we obtain that with probability at least $1 - \exp(-\Omega(M))$, we have $|O_{k+1}| \ge |O_k|(1 + \frac{\gamma}{6e})$. By adjusting also the other constants in \Cref{lem:main}, we have shown a statement differing from this lemma only in the constants $c_1$, $c_2$, and the implicit constants in the asymptotic notation.

    Finally, the main proof of \Cref{thm:main} only uses the previously shown lemmas, so we have shown our claim, which differs from the claim of \Cref{thm:main} only in the constants hidden in the asymptotic notation.

    We finally argue that random selection and binary tournament selection are covered by this theorem. Let $O$ be any set of individuals such that $|O| \ge M/2$. Recall that $N \le a M$ for some constant. For random selection, the probability to pick an individual from $O$ is apparently $\frac{|O|}{N} \ge \frac{M/2}{aM} = \frac{1}{2a} \eqqcolon \gamma$. For binary tournament selection, the probability to select an individual from $O$ is at least the probability that both individuals participating in the tournament are in $O$, which is $(\frac{|O|}{N})^2 \ge (\frac{M/2}{aM})^2 = \frac{1}{4a^2} \eqqcolon \gamma$.
\end{proof}

We note that our proof method does not cover all variants of the \NSGA regarded so far. In particular, surprisingly, our proof does not extend to one-bit mutation. The reason is that the argument that the number of individuals in the Pareto front grows quickly in the second part of the proof of \Cref{lem:main} breaks down, since with one-bit mutation, we never create a copy of a parent.
In \Cref{sec:experiments}, we investigate the impact of one-bit mutation empirically, and \Cref{fig:results_A_B} suggests that while the algorithm still does not manage to cover the entire Pareto front in reasonable time, it covers a larger fraction than with standard bit mutation.

Our proof also does not immediately extend to crossover. The bottleneck here is that it is challenging to show that the probability of generating a new Pareto-optimal solution value is small, which is the main argument in the proof of \Cref{lem:growth}. We note that there are very few proofs of lower bounds for crossover-based algorithms, again for the reason that the population dynamics are harder to analyze.
We briefly study this setting empirically though in \Cref{sec:experiments}.
\Cref{fig:coverage_combined_population} shows that adding crossover has no real impact on the suboptimal coverage of the algorithm.

\section{Experiments}
\label{sec:experiments}

\begin{figure}[t]
    \centering
    \begin{minipage}{0.95\columnwidth}
        \includegraphics[width=\linewidth]{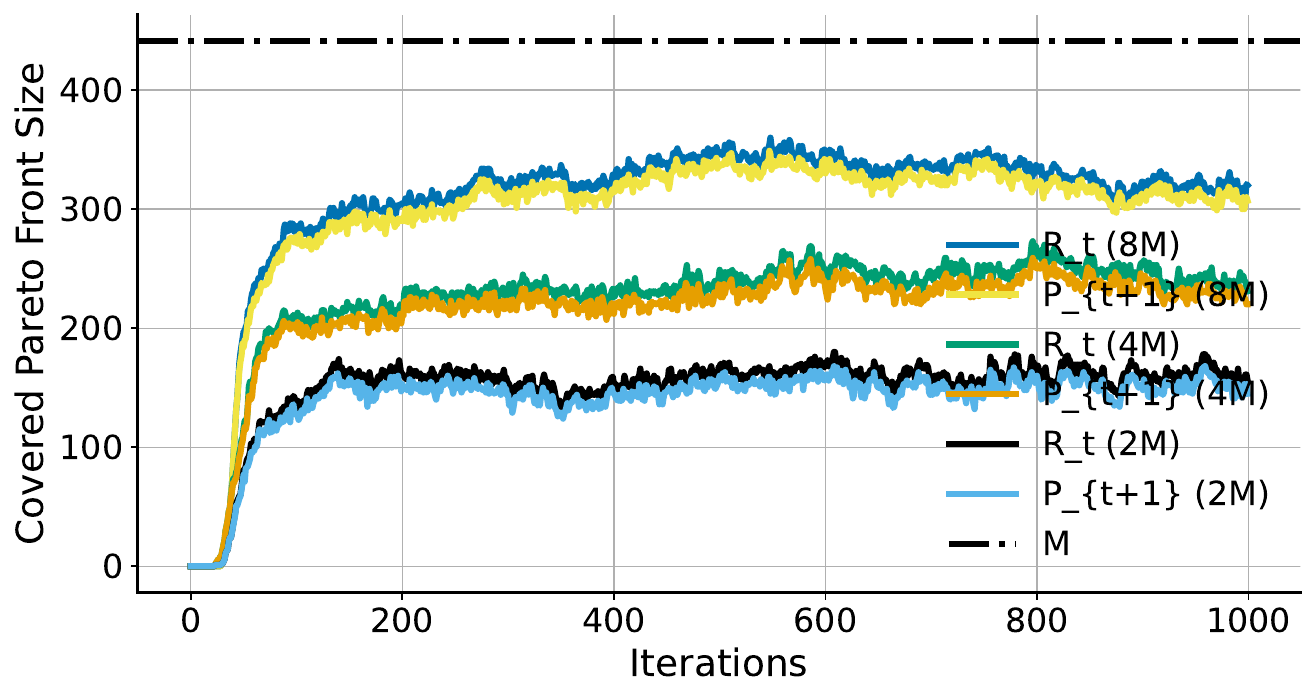}
        \caption{The number of distinct objective values on the Pareto front of the \NSGA (\Cref{nsga}) with standard bit mutation and fair selection for the combined parent and offspring population ($R_t$) and the selected population ($P_{t+1}$) over $1000$ iterations.
            The algorithm is run on the $4$-LOTZ problem of size $n = 40$, with the dashed line showing the total size of the Pareto front ($M = 441$) for reference.
            From top to bottom, each pair of curves refers, respectively, to the population size $N = 8M$, $N = 4M$, and $N = 2M$.
        }
        \label{fig:pareto_fronts_all}
    \end{minipage}
\end{figure}

\begin{figure}[t]
    \centering
    \includegraphics[width=0.95\columnwidth]{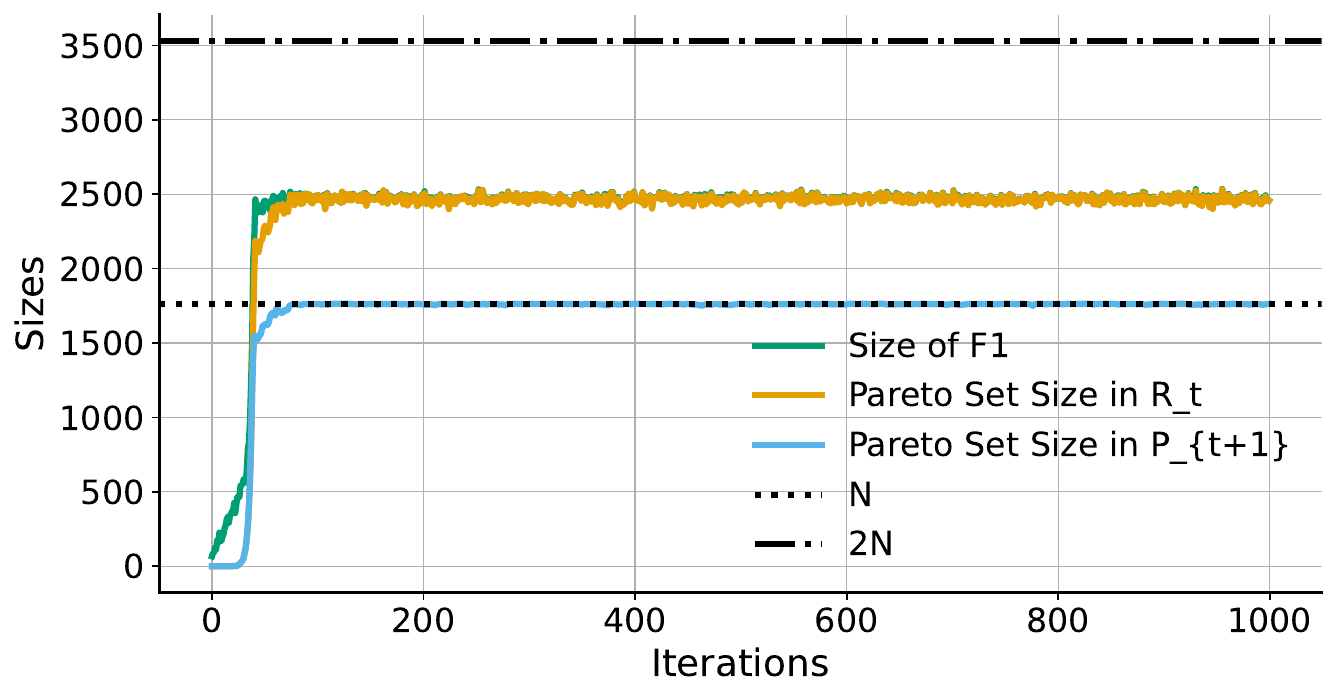}
    \caption{The number of individuals in the non-dominated subpopulation~$F_1$ as well as the number of Pareto-optimal individuals in the combined parent and offspring population~($R_t$) and the selected population~($P_{t + 1}$) of the \NSGA (\Cref{nsga}) with standard bit mutation and fair selection over $1000$ iterations.
        The algorithm is run on the $4$-LOTZ problem of size $n = 40$, resulting in a total size of the Pareto front of $M = 441$, with $N = 4M$. The dashed lines show the total population size of~$R_t$ (namely $2N$) and of~$P_{t + 1}$ (namely $N$) for reference.
        This is the same run as in \Cref{fig:pareto_fronts_all} for $N = 4M$.
    }
    \label{fig:F1_pareto_set_size_all}
\end{figure}

We study empirically how well the \NSGA approximates the Pareto front of $4$-LOTZ in three different settings, two of which study different algorithm variants that are not covered by our theoretical analysis.
Since it is costly to run the \NSGA, we limit our tests to the problem size $n = 40$, which results in a Pareto front of size $M = 441$.
We study the following algorithm variant settings:
\begin{enumerate}
    \item The \textbf{classic \NSGA} as seen in \Cref{nsga}, that is, the variant that uses fair selection when creating offspring (each parent generates exactly one offspring) as well as standard bit mutation.
          This setting aims to observe what fraction of the Pareto front the algorithm is still missing.
          Moreover, this setting acts as a baseline for the other settings.

    \item The \textbf{\NSGA with different parent selection mechanisms}.
          In addition to fair selection, we consider: (a) \emph{random selection}, where each offspring is generated by choosing a parent uniformly at random from the current parent population; (b) \emph{binary tournament selection}, where each offspring is generated by choosing the fittest of two parents, each chosen uniformly at random (breaking ties via non-dominated sorting, crowding distance, and then uniformly at random) with replacement from the current parent population; and (c) \emph{crossover}.
          The algorithm variant with crossover decides uniformly and independently at random for each offspring, before the parent selection, whether it uses crossover followed by mutation or whether it only uses mutation (with random parent selection).
          If it uses crossover, the algorithm first picks two parents uniformly at random, with replacement, and then performs uniform crossover among these two parents before applying mutation to the crossover result.
          All these variants still use standard bit mutation.

    \item The \textbf{\NSGA with one-bit mutation}, where an offspring is generated by flipping exactly one bit in the parent, with the bit position chosen uniformly at random.
          This variant still uses fair parent selection.
\end{enumerate}

\textbf{Classic NSGA-II.}
\Cref{fig:pareto_fronts_all} shows how well the \NSGA covers the Pareto front over a span of~$1000$ iterations, for three different population sizes.
We see that it is missing for each configuration still a constant fraction of the Pareto front, but this missing fraction is smaller the larger the population size is.
Furthermore, we see that while the combined population~$R_t$ covers more of the Pareto front then the selected population~$P_{t + 1}$, the difference is not very large.
This suggests that the offspring population does not add many new objective values such that the selection process for the new parent population is able to remove sufficiently many objective values for the process to converge prematurely.
This happens very consistently, as the trends of both curves are very similar.

\begin{figure}[t]
    \centering
    \includegraphics[width=0.95\columnwidth]{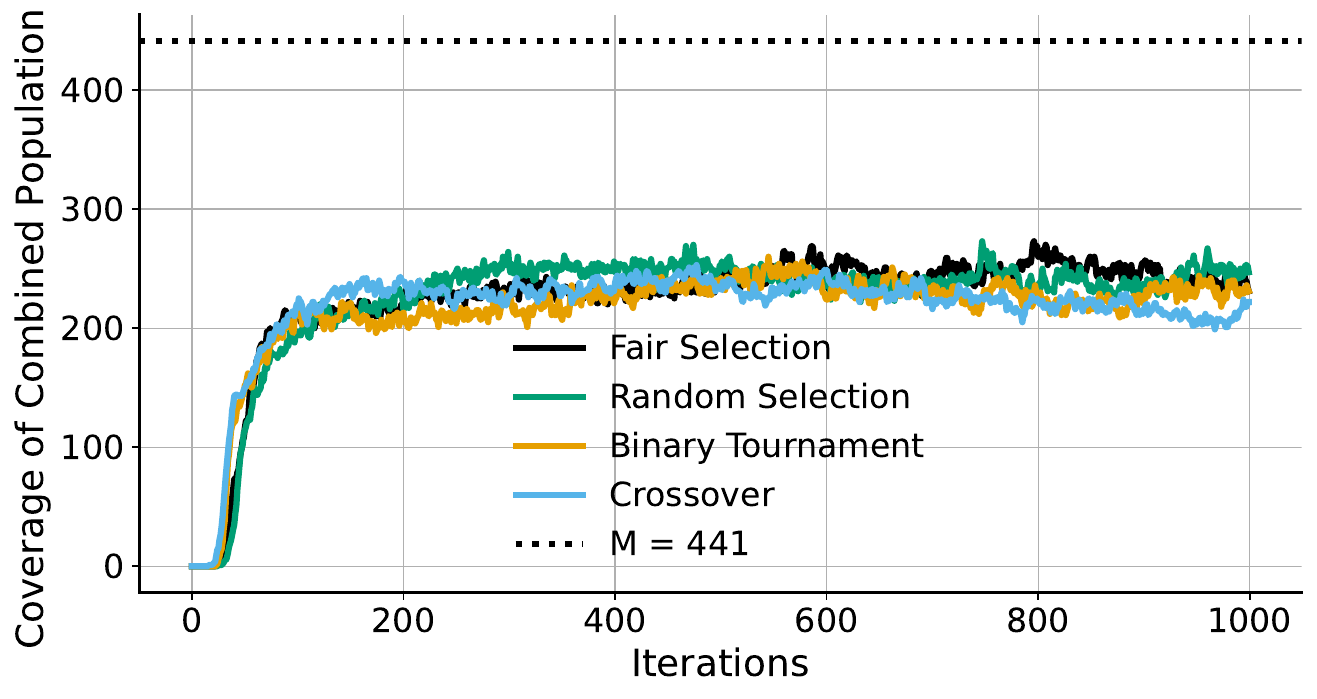}
    \caption{The number of distinct objective values on the Pareto front of the \NSGA (\Cref{nsga}) with standard bit mutation and different selection methods for the combined parent and offspring population ($R_t$) over $1000$ iterations.
        The algorithm is run on the $4$-LOTZ problem of size $n = 40$, with the dashed line showing the total size of the Pareto front ($M = 441$) for reference.
        The population size of the algorithm is $N = 4M$.
        The data for the run of the \emph{Standard} algorithm is the same as in \Cref{fig:pareto_fronts_all} with $N = 4M$.
    }
    \label{fig:coverage_combined_population}
\end{figure}

\begin{figure}[t]
    \centering
    \includegraphics[width=0.95\columnwidth]{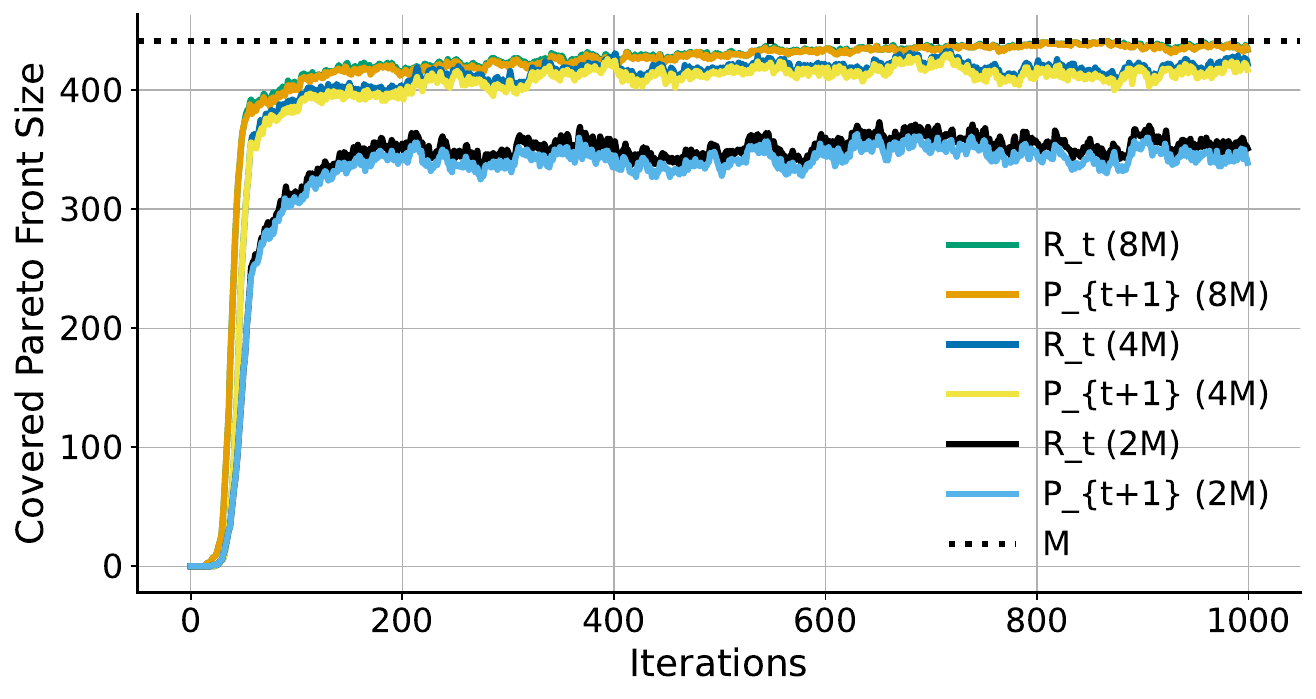}
    \caption{The number of distinct objective values on the Pareto front of the \NSGA (\Cref{nsga}) with one-bit mutation and fair selection for the combined parent and offspring population ($R_t$) and the selected population ($P_{t+1}$) over $1000$ iterations.
        The algorithm is run on the $4$-LOTZ problem of size $n = 40$, with the dashed line showing the total size of the Pareto front ($M = 441$) for reference.
        From top to bottom, each pair of curves refers, respectively, to the population size $N = 8M$, $N = 4M$, and $N = 2M$.
    }
    \label{fig:results_A_B}
\end{figure}

\Cref{fig:F1_pareto_set_size_all} shows how many individuals are Pareto-optimal over~$1000$ iterations, for three different population sizes.
We see that in all cases, the selected population~$P_{t + 1}$ for the next iteration consists almost exclusively of Pareto-optimal individuals.
The same is not true for the combined population, as it is easy to create non-optimal individuals.
Nonetheless, almost all of the non-dominated individuals are also Pareto-optimal.
Still, non-dominated individuals that are not Pareto-optimal regularly result in~$P_{t + 1}$ containing a small number of non-optimal individuals.

\textbf{Different parent selection mechanisms.}
\Cref{fig:coverage_combined_population} shows the Pareto front coverage of the combined population of the \NSGA algorithm with different selection mechanisms, all with population size $N = 4M$, measured over $1000$ iterations.
As we prove with \Cref{thm:general} for all mechanisms but crossover, the algorithm struggles.
The behavior seems to be very similar in all configurations, suggesting that the exact type of selection mechanism is not very relevant (recalling that \Cref{thm:general} covers an even more general class).
Notably, crossover exhibits a very similar performance, further suggesting that various mechanisms fail to make the \NSGA cover the entire Pareto front efficiently.

\textbf{One-bit mutation.}
\Cref{fig:results_A_B} shows how well the \NSGA with one-bit mutation covers the Pareto front over a time span of $1000$ iterations, for three different population sizes.
We see that, similar to \Cref{fig:pareto_fronts_all} with standard bit mutation, the algorithm typically does not cover the entire Pareto front during this time, and it covers a larger fraction when using a larger population size.
However, the fraction covered is larger as for the results seen in \Cref{fig:pareto_fronts_all}.
For a population size of $N = 8M$, the algorithm even occasionally covers the entire Pareto front, but it is missing, on average,~$5$ objective values on the Pareto front starting from around iteration~$800$, showing that it does not \emph{converge} to the entire Pareto front.
This improved coverage can be potentially attributed to the fact that one-bit flip mutation never creates a copy of the parent.
However, the main difference between \Cref{fig:results_A_B,fig:pareto_fronts_all} seems to be that the \NSGA with one-bit mutation creates a more diverse population early on.
Once the algorithm struggles to make progress, the relation between~$R_t$ and~$P_{t + 1}$ is comparable to that seen for standard bit mutation in \Cref{fig:pareto_fronts_all}.
This suggests that not creating copies of parents may help when not too many solutions are on the Pareto front and there is enough space to create new, incomparable solutions.

\section{Conclusion}
\label{sec:conclusion}

In this work, we have extended the result of~\cite{ZhengD24many}, which shows that the \NSGA has enormous difficulties solving the many-objective \omm problem, to the \lotz benchmark. This result is interesting, and technically non-trivial, as it shows that the difficulties observed in~\cite{ZhengD24many} persist also in situations where non-dominated sorting is relevant for the selection, and thus could potentially overcome the weaknesses of the crowding distance pointed out in~\cite{ZhengD24many}.
Together with the recent results showing that the NSGA-III, SMS-EMOA, and SPEA2 do not have any such weaknesses for many objectives~\cite{WiethegerD23,OprisDNS24,ZhengD24,WiethegerD24,RenBLQ24}, our result suggests that the crowding distance as secondary selection criterion is less suitable compared to several alternatives proposed in the literature.

For future work, it would be interesting to study other variants of the \NSGA rigorously, for example, when using one-bit mutation or crossover.

\section*{Acknowledgments}

This research benefited from the support of the FMJH Program Gaspard Monge for optimization and operations research and their interactions with data science.

\bibliographystyle{alphaurl}
\bibliography{alles_ea_master,ich_master,rest}

\end{document}